\begin{document}

\title{Modeling and Analysis of Non-unique Behaviors in Multiple Frictional Impacts}

\author{Mathew Halm and Michael Posa \\ GRASP Laboratory,  University of Pennsylvania \\ \texttt{\{mhalm, posa\}@seas.upenn.edu} }

\maketitle

\begin{abstract}
Many fundamental challenges in robotics, based in manipulation or locomotion, require making and breaking contact with the environment.
To represent the complexity of frictional contact events, impulsive impact models are especially popular, as they often lead to mathematically and computationally tractable approaches.
However, when two or more impacts occur simultaneously, the precise sequencing of impact forces is generally unknown, leading to the potential for multiple possible outcomes.
This simultaneity is far from pathological, and occurs in many common robotics applications.
In this work, we propose an approach for resolving simultaneous frictional impacts, represented as a differential inclusion. 
Solutions to our model, an extension to multiple contacts of Routh's method, naturally capture the set of potential post-impact velocities.
We prove that solutions to the presented model must terminate.
This is, to the best of our knowledge, the first such guarantee for set-valued outcomes to simultaneous frictional impacts.
\end{abstract}

\IEEEpeerreviewmaketitle

\section{Introduction}
Modern robots are fast and strong, and, in some situations, their capabilities eclipse those of humans.
However, when these robots interact with their environment, whether by manipulating objects or traversing over uneven surfaces, they do so with far less skill than their human counterparts.
Critical challenges facing the field lie in modeling, planning, and control of robots in these complex, multi-contact settings, particularly for locomotion \cite{Wieber2016} and manipulation \cite{Kemp2007}.

Rigid-body models of dynamics and contact (see \citet{Stewart2000} or \citet{Brogliato99} for an overview) are widely used in robotics, as they can lead to far more tractable methods than approaches which explicitly attempt to capture the stiff interaction between objects.
These approaches have also led to complementarity-based simulation schemes, such as \cite{Anitescu97, Drumwright2010, Horak2019, Kaufman08, Smith2012, Stewart1996a} and others.
Recent research, using complementarity models, has also been conducted into multi-contact optimal planning \cite{Mordatch15, Posa13, Posa2016} and control \cite{Hogan2016,Posa14}.
Similar applications have been seen for manipulation (e.g. \cite{Salehian2018}), including quasi-static approaches \cite{Chavan-Dafle2017, Halm2018}.
When impacts occur, rigid-body models approximate the event as an instantaneous change in velocity due to an impulsive force.

The approaches above, now deeply ingrained within the robotics community, universally assume that it is possible to determine a single potential post-impact velocity, even during simultaneous multi-contact. 
However, as observed in \cite{Hurmuzlu1994, Ivanov1995, Smith2012, Uchida2015, Wang1992} and others, including recent analysis of robot locomotion \cite{Remy2017}, the resolution of simultaneous impacts is dependent upon the sequence in which they are resolved.
Simulation schemes to this problem (e.g. \cite{Coumans2015, Erleben2004, Jia2013, Kaufman08, Liu2008, Smith2012, Uchida2015} and many others) focus on generation of a single solution via a heuristic (symmetry \cite{Kaufman08}, potential energy \cite{Uchida2015}, etc.).
However, for many practical applications in robotics, it is not possible to create a model detailed enough to reliably disambiguate between the multiple potential solutions;
essentially, the disambiguation performed by common simulation schemes is not grounded in physical principles.
Even were we to be given such detail, this lack of uniqueness often represents an extreme sensitivity to initial conditions: slight perturbations in the initial state of the system might lead to different impact sequences.
As a result, rather than focus on producing a \textit{single} potential solution, here we consider the \textit{set} of such solutions.

As the motivating examples in \ref{section:examples} will demonstrate, simultaneous impacts are not limited to unlikely, pathological events but are, in fact, regular occurrences in robotics and require careful analysis.
From the perspective of planning, learning, and control, it is critical to understand the role of this non-uniqueness (alternatively, extreme sensitivity), as some of the broad challenges in executing dynamic, multi-contact motion likely arise from these issues.
For example, methods which use a simulator to learn or plan a motion may, unwittingly, be planning for an ambiguous, therefore unstable, outcome due to multi-contact.
Furthermore, as the set of these ambiguous outcomes is often non-convex, it is insufficient to try to capture this sensitivity via simple models of uncertainty.
                     
Many methods have been proposed for modeling single impacts (e.g. \cite{Bhatt95, Chatterjee98, Routh91, Stronge90}, and others) along with recent data-driven models \cite{Fazeli2017, Jiang2018}, experimental validation \cite{Fazeli2017a}, and efforts to translate multi-contact simulated motions to real robots \cite{Tan2018}.
Comparable results for simultaneous impacts have largely focused on simulation, with the intent to produce a single, reasonable solution (e.g \cite{Anitescu97, Drumwright2010, Smith2012}), where \citet{Anitescu97} and \citet{Drumwright2010} guarantee termination of their numerical methods.
Other related work addresses specialized, restricted settings. 
\citet{Seghete2014} developed a model where solutions were guaranteed to exist, but assumed that contact normal vectors are linearly independent.
\citet{Burden2016} studied discontinuous vector fields, with strong results and applications to robot impacts, but are similarly restricted to frictionless contact.
\citet{Johnson2016a} treated a limited form of friction, but assumed that contact occurs only at massless limbs.
For a quasi-static model, thus without impact, \citet{Halm2018} guaranteed existence of solutions for multi-contact motion.
                     
This work extends Routh's  graphical model \cite{Routh91} to address simultaneous, inelastic impacts by permitting impulses to occur in arbitrary sequences.
As a result, the model produces a set-valued map that captures the inherent lack of uniqueness.
We believe this is the appropriate description for robotic planning and control, as motions that present as non-unique will, for physical systems, display extreme sensitivity to any errors in estimation or control.
In contrast with prior literature, the presented model captures a broad class of frictional systems.
In \ref{section:model}, we describe the model and a number of its theoretical properties and in \ref{section:termination} we prove the key result that the impact model is guaranteed to terminate.
To the best of the authors' knowledge, this work presents the first known formal result for set-valued solutions to simultaneous frictional impact.
\section{Background}
We now introduce notation for and study the limiting behaviors of the frictional impact dynamics of rigid multibody systems. Denote the interior, closure, and convex hull of a set $A$ as $\Interior(A)$, $\Closure(A)$, and $\Hull(A)$. We identify the $l_p$-norm and unit direction of a vector $\Velocity \in \Real^n$ as $\Norm{\Velocity}_p$ and $\Direction{\Velocity} = \frac{\Velocity}{\TwoNorm{\Velocity}}$, respectively. We define the open $r$-radius ball in $\Real^n$ as $\Ball[r]$.
We denote $\Real^{n+} \subseteq \Real^n$ as the vectors with strictly positive entries and define a function $f: \Domain \subseteq \Real^n \to \Closure \Real^+$ to be positive definite if it is strictly positive on $\Domain \setminus \Braces{\ZeroVector}$. For a single-valued function $f: A \to B$ and a set-valued function $D: A \to \PowerSet{B}$, we denote the image of $A' \subseteq A$ under $f$ and $D$ as $f(A') \subseteq B$ and $D(A') \subseteq B$ respectively.
\subsection{Functional Analysis}
The results herein are broadly derived from measure theory and functional analysis; for a thorough background, see \citet{Rudin1986,Rudin1991}.
For a set $\Domain \subseteq \Real^n$, we equip $\Domain$ with the standard Euclidean metric and norm, and integrals on $\Domain$ are with respect to the Lebesgue measure by default.
The total time derivative $\dot{\vect f}(t)$ of an absolutely continuous function $\vect f(t)$ is taken in the Lebesgue sense (i.e. ${\vect f}(t)$  is the anti-derivative of $\dot{\vect f}(t)$, which is defined almost everywhere ($a.e.$)). Convergence of a sequence of functions $f_n$ to $f$ almost everywhere and uniformly are denoted $f_n \AlmostConvergence f$ and $f_n \UniformConvergence f$, respectively. A key result for the derivations in this work is the Arzel\`a-Ascoli Theorem \cite{Rudin1991}:

\begin{theorem}[Arzel\`a-Ascoli]\label{thm:Rellich}
	Let $\Sequence{\vect f}{n}$ be a uniformly bounded equicontinuous sequence of $\Real^n$-valued functions on some compact interval $I$. Then there exists a function $\vect f$ and subsequence $\Parentheses{{\vect f}_{n_k}}_{k \in \Natural}$ such that $\vect f_{n_k} \UniformConvergence \vect f$.
\end{theorem}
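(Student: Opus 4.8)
The plan is to prove this via the classical three-step strategy: diagonal extraction on a countable dense set, followed by an equicontinuity argument that promotes pointwise convergence on that set to uniform convergence on all of $I$. Since $I$ is a compact interval, it contains a countable dense subset $Q = \Braces{q_1, q_2, \dots}$ (for instance the rational points lying in $I$), and I would begin by fixing such an enumeration.

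First I would extract a subsequence converging at every point of $Q$. Because the sequence is uniformly bounded, the values $\vect f_1(q_1), \vect f_2(q_1), \dots$ all lie in a fixed compact subset of $\Real^n$, so the Bolzano--Weierstrass theorem yields a subsequence convergent at $q_1$. Passing to a further subsequence makes it converge at $q_2$ as well, and iterating gives, for each $i$, a subsequence of the previous one that converges at $q_i$. A Cantor diagonal argument then produces a single subsequence, which I will again denote $\Parentheses{\vect f_{n_k}}_{k \in \Natural}$, that converges at every point of $Q$ simultaneously.

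The heart of the proof is showing this diagonal subsequence is uniformly Cauchy on all of $I$. Fix $\epsilon > 0$. Equicontinuity supplies a single $\delta > 0$, valid for every $\vect f_{n_k}$ at once, such that $\TwoNorm{\vect f_{n_k}(s) - \vect f_{n_k}(t)} < \epsilon$ whenever $\Abs{s - t} < \delta$. Since $I$ is compact, finitely many points of $Q$, say $q_{i_1}, \dots, q_{i_m}$, form a $\delta$-net of $I$. Convergence at each of these finitely many points makes the subsequence Cauchy there, so there is an index $K$ beyond which all pairwise differences at the net points fall below $\epsilon$. For arbitrary $t \in I$, choosing a net point $q$ within $\delta$ of $t$ and splitting $\vect f_{n_j}(t) - \vect f_{n_k}(t)$ through the intermediate values $\vect f_{n_j}(q)$ and $\vect f_{n_k}(q)$ bounds $\TwoNorm{\vect f_{n_j}(t) - \vect f_{n_k}(t)}$ by $3\epsilon$ uniformly in $t$ once $j, k \geq K$. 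Hence the subsequence is uniformly Cauchy; by completeness of $\Real^n$ it converges uniformly to a limit $\vect f$, and $\vect f_{n_k} \UniformConvergence \vect f$ as claimed.

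The main obstacle is precisely this last promotion step: pointwise convergence on $Q$ is by itself far too weak, and the crucial insight is that equicontinuity lets a single $\delta$ serve for every member of the subsequence at once, which---combined with the finiteness of the $\delta$-net afforded by compactness of $I$---is exactly what closes the gap between the countably many controlled points and the continuum of points of $I$. Everything else (the diagonal extraction and the triangle-inequality bookkeeping) is routine once this interplay between equicontinuity and compactness is set up correctly.
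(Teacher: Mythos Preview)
Your argument is the standard, correct proof of Arzel\`a--Ascoli: diagonal extraction on a countable dense subset using uniform boundedness and Bolzano--Weierstrass, then upgrading pointwise convergence on that set to uniform Cauchyness via equicontinuity and a finite $\delta$-net supplied by compactness of $I$. Nothing is missing; the $3\epsilon$ bookkeeping and the role of equicontinuity are laid out accurately.

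There is no comparison to make with the paper's own proof, because the paper does not prove this theorem. It is stated as background and attributed to \citet{Rudin1991}; the authors simply invoke it later (in the proof of Theorem~\ref{thm:incrementaldissipation}) to extract a uniformly convergent subsequence from a uniformly bounded, equicontinuous family of velocity trajectories. So your proposal supplies a proof where the paper deliberately cites one, and the classical route you describe is exactly what one finds in the cited reference.
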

\begin{figure*}
\centering
\begin{minipage}{.49\hsize}
	\vspace{2mm}
    \begin{subfigure}{.45 \hsize}
	   \includegraphics[width=\hsize]{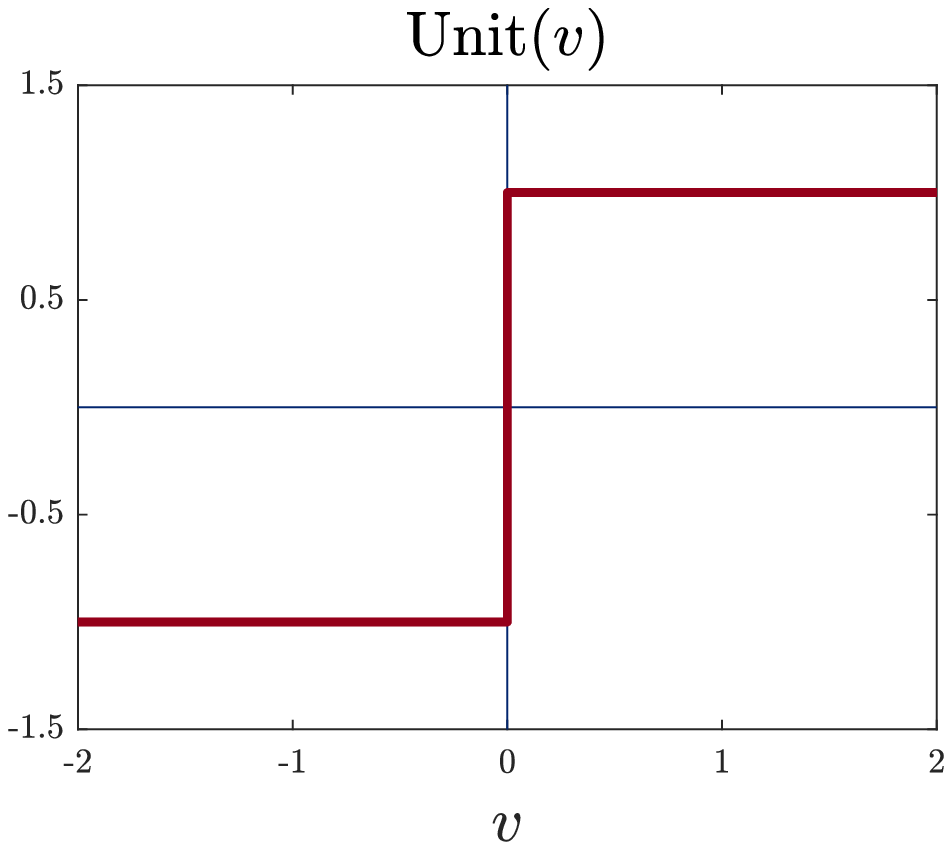}
       \caption{\label{figure_Ua}}
    \end{subfigure}
    \hfill
    \begin{subfigure}{.47 \hsize}
        \includegraphics[width=\hsize]{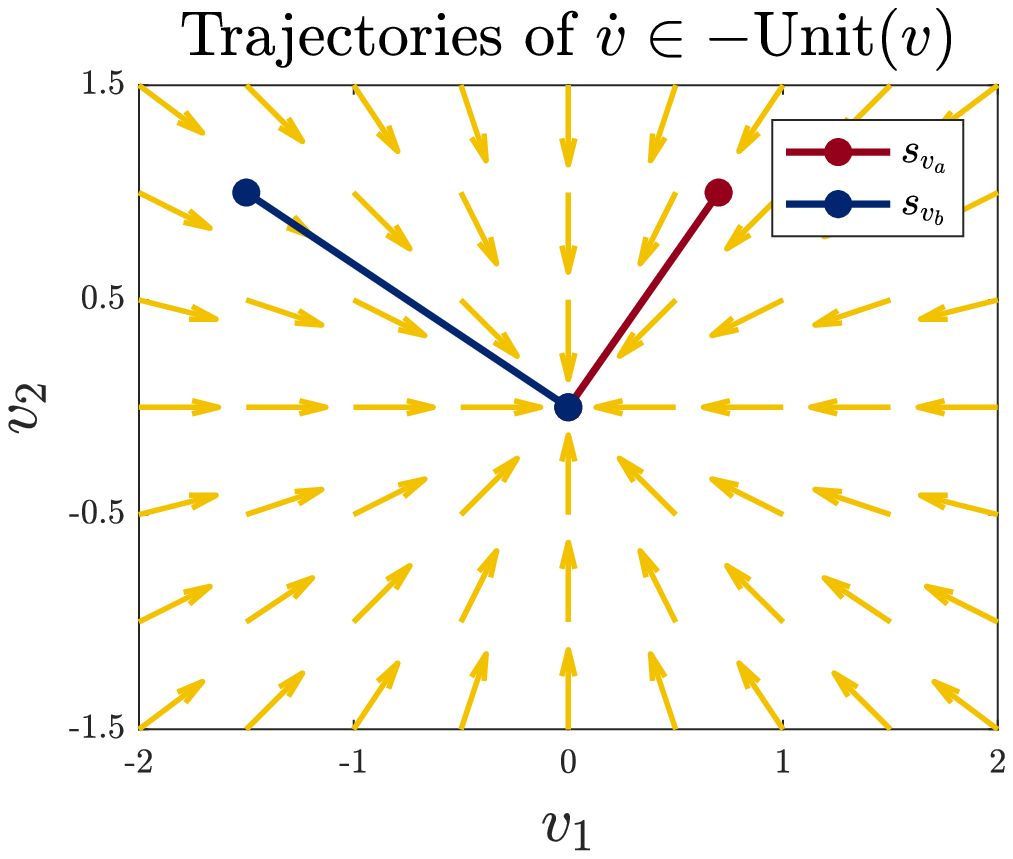}
        \caption{\label{figure_Ub}}
    \end{subfigure}
	\caption{(\subref*{figure_Ua}) Graph of $\Unit(\Velocity)$ for $n=1$. Note that $\Unit(\Velocity)$ is continuous on $\Velocity \neq 0$. At $\ZeroVector$, $\Unit$ takes the value $\Brackets{-1,1}$, which contains a continuous extension of $\Direction{\Velocity}$ from both the left $(-1)$ and the right $(+1)$, so that $\Unit$ is u.s.c.. (\subref*{figure_Ub}) Flow field of the solutions to $\dot \Velocity \in -\Unit(\Velocity)$ for $n=2$.}
	\label{fig:unit}
\end{minipage}
\hfill
\begin{minipage}{.48\hsize}
		\centering
		\vspace{1mm}
        \includegraphics[width=0.7\textwidth]{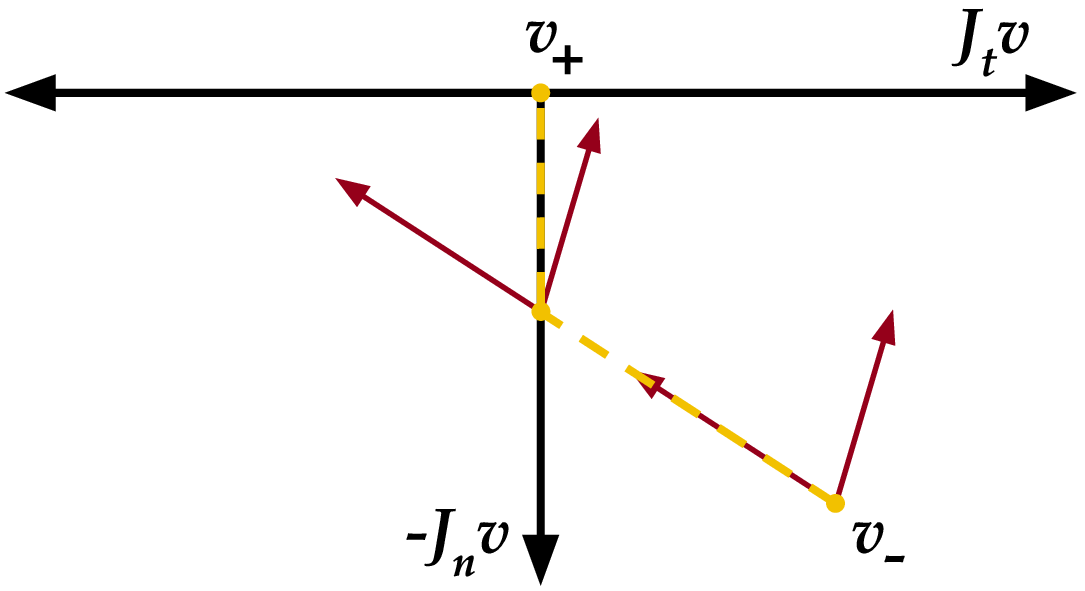}
\caption{Velocity throughout an impact resolution by Routh's method (image adapted from \citet{Posa14}).
At the initial state, the velocity-projected extreme rays of the friction cone are shown as solid arrows.
The contact begins in a sliding regime.
When $\Velocity$, shown in the dotted line, intersects $\Jt \Velocity = \ZeroVector$, the contact transitions to sticking and the impact terminates when $\Jn \Velocity=\ZeroVector$.}
\label{fig:routh}
\end{minipage}
\vspace{-5mm}
\end{figure*}

\subsection{Differential Inclusions}
The dynamics of many robots can be captured accurately with a system of ordinary differential equations (ODEs) $\dot \State = \vect{f}(\State,\Input),$ which relates $\State \in \mathbb{R}^n$, the state of the robot (typically some notion of position and velocity), to $\Input \in \mathbb{R}^m$, a set of inputs (such as motor torques) that can be manipulated. However, the dynamics of rigid bodies under frictional contact present complexities that this formulation cannot capture. Impacts between bodies induce instantaneous jumps in velocity that in general cannot described by an ODE (\textit{non-smooth} behaviors). Additionally, when contact occurs at many points, multiple frictional forces that obey Coulomb's laws of friction may exist (\textit{non-unique} behaviors). It is therefore useful to define an object that, unlike ODEs, allows for the derivative at each state to lie in a set of possible values
\begin{equation}
\dot \Velocity \in \DerivativeMap (\Velocity)\,.
\label{eq:differentialinclusion}
\end{equation}
As the map $\DerivativeMap (\Velocity)$ associated with friction may not be continuous, conditions for a function $\Velocity(t)$ to be a solution to the \textit{differential inclusion} (\ref{eq:differentialinclusion}) are weakened from those of an ODE:
\begin{definition}
	For a compact interval $I$, $\Velocity : I \to \Real^n$ is a solution to the differential inclusion $\dot \Velocity \in \DerivativeMap (\Velocity)$ if $\Velocity$ is absolutely continuous and $\dot \Velocity(t) \in \DerivativeMap (\Velocity(t))$ $a.e.$ on $I$. Denote the set of such solutions as $\SolutionSet{\DerivativeMap}[I]$.
\end{definition}
\noindent Solutions to initial value problems for (\ref{eq:differentialinclusion}) are defined similarly:
\begin{definition}
	For $I = [a, b]$ compact, denote the set of functions $\Velocity(t) \in \SolutionSet{\DerivativeMap}[I]$ with $\Velocity(a) = \Velocity_0$ as $\IVP{\DerivativeMap}{\Velocity_0}{I}$.
\end{definition}
For example, consider the differential inclusion
\begin{equation}
	\dot \Velocity \in -\Unit\Parentheses{\Velocity}\,,
\end{equation}
where $\Unit\Parentheses{\Velocity}$ is the set-valued unit direction function
\begin{equation}
	\Unit\Parentheses{\Velocity} = \begin{cases}
		\Braces{\Direction{\Velocity}} & \Velocity \neq \ZeroVector\,, \\
		\Closure \Ball[1] & \Velocity = \ZeroVector\,.
	\end{cases}
\end{equation}
For any compact interval $I = [0,T]$, the initial value problem $\IVP{-\Unit}{\Velocity_0}{I}$ admits the unique solution
\begin{equation}
	\vect s_{\Velocity_0}(t) = \begin{cases}
		\Parentheses{\TwoNorm{\Velocity_0} - t}\hat{\Velocity}_0 & t \leq \TwoNorm{\Velocity_0}\,,  \\
		\ZeroVector & t \geq \TwoNorm{\Velocity_0}\,.
	\end{cases}
\end{equation}
$\vect s_{\Velocity_0}(t)$ is non-differentiable at $t = \TwoNorm{\Velocity_0}$ and thus is not a solution of any ODE.
In general, non-emptiness, regularity, and closure of $\IVP{D}{\Velocity_0}{I}$ depend on the structure of $\DerivativeMap (\Velocity)$; fortunately, solution sets for frictional dynamics are well-behaved due to their \textit{upper semi-continuous (u.s.c.)} structure:
\begin{definition}
	A function $\DerivativeMap : A \to \PowerSet{B}$ with values closed in $B$ is \textit{upper semi-continuous} if $\forall \Sequence{a}{n} \in A,\Sequence{b}{n} \in B$ with $a_n \StrongConvergence a$, $b_n \StrongConvergence b$, and $b_n \in \DerivativeMap (a_n)$, we have $b \in \DerivativeMap (a)$. 
\end{definition}
\begin{proposition}[\citet{Aubin1984}]\label{prop:closure}
	Let $\Velocity_0 \in \VelocitySpace$ and $I$ be a compact interval. If $D(\Velocity)$ is uniformly bounded; u.s.c.; and closed, convex, and non-empty at all $\Velocity$, $\IVP{\DerivativeMap}{\Velocity_0}{I}$ is u.s.c. in $\Velocity_0$. Furthermore $\SolutionSet{\DerivativeMap}[I]$ as well as $\IVP{\DerivativeMap}{\Velocity_0}{I}$ are non-empty and closed under uniform convergence.
\end{proposition}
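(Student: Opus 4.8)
The plan is to prove the three assertions — non-emptiness, closure under uniform convergence, and upper semi-continuity in $\Velocity_0$ — in an order that lets each reuse the last. The engine behind all of them is a single \emph{convergence theorem}: a uniform limit of solutions is again a solution. I would establish this first, read off closure immediately, obtain u.s.c.\ in $\Velocity_0$ as a corollary, and finally prove existence by exhibiting an explicit convergent family of approximate solutions. Throughout, uniform boundedness fixes an $M$ with $\DerivativeMap(\Velocity) \subseteq \Closure \Ball[M]$ for every $\Velocity$, so that every solution has $\dot\Velocity(t) \in \Closure \Ball[M]$ $a.e.$ and is therefore $M$-Lipschitz. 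Hence any family of solutions on the compact interval $I$ with bounded initial data is uniformly bounded and equicontinuous — exactly the hypotheses of the Arzel\`a--Ascoli Theorem (Theorem \ref{thm:Rellich}).

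For the convergence theorem, suppose $\Velocity_n \in \SolutionSet{\DerivativeMap}[I]$ with $\Velocity_n \UniformConvergence \Velocity$. The derivatives $\dot\Velocity_n$ are bounded by $M$ in $L^2(I)$ (as $I$ has finite measure), so by reflexivity a subsequence converges weakly to some $\vect w \in L^2(I)$. Testing the weak limit against indicators $\mathbf 1_{[a,t]}$ and using $\Velocity_n \to \Velocity$ pointwise shows $\Velocity(t) = \Velocity(a) + \int_a^t \vect w$, so $\Velocity$ is absolutely continuous with $\dot\Velocity = \vect w$ $a.e.$ The remaining task, $\dot\Velocity(t) \in \DerivativeMap(\Velocity(t))$ $a.e.$, is where I would invoke Mazur's lemma to upgrade weak convergence to strong $L^2$ convergence of convex combinations $\sum_k \alpha_k \dot\Velocity_{n_k}$, then pass to an $a.e.$-convergent subsequence. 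At a fixed time $t$ where these combinations converge to $\dot\Velocity(t)$ and $\Velocity_n(t) \to \Velocity(t)$, u.s.c.\ together with the closed values of $\DerivativeMap$ gives, for each $\epsilon > 0$, the inclusion $\DerivativeMap(\Velocity_n(t)) \subseteq \DerivativeMap(\Velocity(t)) + \Ball[\epsilon]$ for $n$ large; since $\DerivativeMap(\Velocity(t))$ is convex, so is $\DerivativeMap(\Velocity(t)) + \Ball[\epsilon]$, the convex combinations stay inside it, and so does their limit $\dot\Velocity(t)$. Intersecting over $\epsilon$ and using closedness of $\DerivativeMap(\Velocity(t))$ closes the argument.

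This second paragraph is the crux, and I expect the main obstacle to be precisely this passage from a weak statement about derivatives to a pointwise membership statement. Convexity of $\DerivativeMap$ is indispensable here: Mazur's lemma only produces convex combinations, and only a convex target set is preserved under them. Equally essential is upgrading the sequential u.s.c.\ hypothesis to its $\epsilon$--$\delta$ form $\DerivativeMap(\vect y) \subseteq \DerivativeMap(\vect x) + \Ball[\epsilon]$, which is legitimate because $\DerivativeMap$ is closed-valued and uniformly bounded; I would verify this upgrade as a short lemma, since otherwise the ``large $n$'' inclusion above is not available.

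The remaining claims then follow cheaply. Closure of $\SolutionSet{\DerivativeMap}[I]$ under uniform convergence \emph{is} the convergence theorem, and closure of $\IVP{\DerivativeMap}{\Velocity_0}{I}$ follows because the constraint $\Velocity(a) = \Velocity_0$ survives uniform limits. For u.s.c.\ in $\Velocity_0$, given $\Velocity_0^n \to \Velocity_0$ and $\Velocity_n \in \IVP{\DerivativeMap}{\Velocity_0^n}{I}$ with $\Velocity_n \UniformConvergence \Velocity$, the convergence theorem yields $\Velocity \in \SolutionSet{\DerivativeMap}[I]$, while $\Velocity(a) = \lim \Velocity_n(a) = \Velocity_0$, so $\Velocity \in \IVP{\DerivativeMap}{\Velocity_0}{I}$, matching the paper's sequential u.s.c.\ definition. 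Finally, for non-emptiness I would build Euler polygonal approximants: partition $I$ into a mesh of width $1/n$ and, on each subinterval, advance at a constant velocity chosen from $\DerivativeMap$ evaluated at the left endpoint. Each approximant is $M$-Lipschitz, so Arzel\`a--Ascoli extracts a uniform limit, and a variant of the convergence-theorem argument — now accounting for the mesh-induced gap between $\Velocity_n(t)$ and the evaluation point, which vanishes as $n \to \infty$ by the $M$-Lipschitz bound — shows the limit is a genuine solution with $\Velocity(a) = \Velocity_0$.
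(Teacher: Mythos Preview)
The paper does not supply its own proof of this proposition: it is stated with attribution to \citet{Aubin1984} and invoked as a black box, so there is nothing in the paper to compare your argument against line by line. What you have written is, in outline, precisely the classical proof one finds in Aubin--Cellina (weak compactness of the derivatives, Mazur's lemma to pass from weak to strong convergence of convex combinations, u.s.c.\ plus convexity and closedness to trap the pointwise limit inside $\DerivativeMap(\Velocity(t))$, and Euler polygons plus Arzel\`a--Ascoli for existence), so in that sense your route coincides with the cited source rather than diverging from it.

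One point worth tightening if you flesh this out: the Mazur step needs the convex combinations to draw only from the \emph{tail} $\{\dot\Velocity_k : k \geq m\}$ so that every term participating in the $m$-th combination already satisfies $\DerivativeMap(\Velocity_k(t)) \subseteq \DerivativeMap(\Velocity(t)) + \Ball[\epsilon]$; the standard formulation of Mazur's lemma gives this, but your sketch elides it. You also need a single full-measure set of times $t$ on which $\dot\Velocity_k(t) \in \DerivativeMap(\Velocity_k(t))$ holds simultaneously for every $k$, obtained by intersecting countably many conull sets. Both are routine, but since you flagged the pointwise-membership passage as the crux, these are exactly the details that make it go through.
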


Intuitively, a map is u.s.c. if its value at each $\Velocity$ is not significantly smaller than its value at any $\Velocity'$ near $\Velocity$. $\Unit\Parentheses{\Velocity}$, for example, obeys all requirements of Proposition \ref{prop:closure}. As it is a singleton, $\IVP{-\Unit}{\Velocity_0}{I}$ is closed, non-empty, and convex; furthermore, if $\Velocity_n \StrongConvergence \Velocity_\infty$, then $\vect s_{\Velocity_n} \UniformConvergence \vect s_{\Velocity_\infty}$ with $\vect s_{\Velocity_\infty} \in \IVP{-\Unit}{\Velocity_\infty}{I}$.  An illustration of this system as well as the function $\Unit(\Velocity)$ can be found in Figure \ref{fig:unit}.
\subsection{Frictional Impact Dynamics}
Many robots' dynamics can be modeled as a system of rigid bodies experiencing contact at up to $m$ points (for a thorough introduction, see \cite{Stewart2000} and \cite{Brogliato99}). 
The state of such a system can be represented by configuration $\Configuration(t)$ and velocities $\Velocity(t) \in \VelocitySpace$.
The continuous evolution is governed by
\begin{equation}
	\Mass(\Configuration)\dot\Velocity + \CoriolisAndGravity(\Configuration,\Velocity) = \Jn[C](\Configuration)^T\NormalForce[C] + \Jt[C](\Configuration)^T\FrictionForce[C]\,,
\end{equation}
where $\Mass(\Configuration)$ is the generalized inertial matrix; $\CoriolisAndGravity(\Configuration,\Velocity)$ encompasses Coriolis and gravitational forces; $\Jn[C] \in \Real^{\Contacts \times \States}$ projects the velocity $\Velocity$ onto the contact normals; and $\Jt[C] \in \Real^{2\FrictionalContacts \times \States}$ projects $\Velocity$ onto the contact tangents of the $k \leq m$ frictional contacts.
We identify the behavior with a set of contacts $C = \Braces{c_1,\dots,c_m}$, and identify each contact $c_i$ with its related vectors: row $i$ of $\Jn[C]$ and rows $2i-1$ and $2i$ of $\Jt[C]$, denoted as $\Jn[c_i]$ and $\Jt[c_i]$, respectively.
Denote the collection of potential contact sets as $\ContactSystems$, thus $C \in \ContactSystems$.
We furthermore define $\SizedContactSystems{m}{k} \subseteq \ContactSystems$ to be the collection of sets of $m$ contacts of which $k \leq m$ are frictional.
The world-frame contact normal and frictional forces $\NormalForce[C](t) \in \Real^{\Contacts}$ and $\FrictionForce[C](t) \in \Real^{2\FrictionalContacts}$ must lie within the Coulomb friction cone $\FrictionCone[C]\Parentheses{\Configuration,\Velocity}$; 
that is, for all $i \in \Braces{1,\dots,m}$ and $j \in \Braces{1,\dots,k}$,
\begin{align}
\NormalForce[C] & \geq \ZeroVector\,, \qquad \NormalForce[c_i]\Jn[c_i]\Velocity \leq \ZeroVector\,,\label{eq:normalseparation} \\
\FrictionForce[c_j] &\in -\FrictionCoeff[c_j]\NormalForce[c_j]\Unit\Parentheses{\Jt[c_j] \Velocity}\,,\label{eq:maximumdissipation}
\end{align}
where $\NormalForce[c_i]$ and $\FrictionForce[c_j]$ are identified similarly to $\Jn[c_i]$ and $\Jt[c_i]$ and $\FrictionCoeff[c_j] > 0$ is the friction coefficient for the $j$th contact.
Additionally, we denote the lumped terms
\begin{align}
	\J[C] &= \begin{bmatrix}
		\Jn[C] \\
		\Jt[C]
	\end{bmatrix}\,, \qquad
	\Force[C](t) = \begin{bmatrix}
		\NormalForce[C](t) \\
		\FrictionForce[C](t)
	\end{bmatrix}\,,\\
	\ActiveSet[C] &= \{ \Velocity \in \VelocitySpace : \exists c \in C, \Jn[C]\Velocity < 0\}\,,\\
	\InactiveSet[C] &= \Interior \Parentheses{\Complement{\ActiveSet[C]}} = \{ \Velocity \in \VelocitySpace : \Jn[C]\Velocity > 0\}\,.
\end{align}
$\ActiveSet[C]$ is the set of actively penetrating velocities, where impact is guaranteed to occur.
$\InactiveSet[C]$ are separating velocities, where no impact can occur.
Note that $\VelocitySpace\setminus(\ActiveSet[C] \cup \InactiveSet[C]) \neq \emptyset$, and velocities in this set \textit{may} require impacts, as in Painlev\'e's Paradox \cite{Stewart2000}.

In this work, we focus on inelastic impulsive impacts, during which velocities change instantaneously. 
Letting $\Impulse[C]$ represent an impulse, pre- and post-impact velocities, $\Velocity_-$ and $\Velocity_+$ obey
$$\Mass(\Configuration)(\Velocity_+ - \Velocity_-) = \J[C]^T\Impulse[C]\,.$$
Coulomb friction poses challenges in computing $\Impulse[C]$, as an impact may cause stick-slip transitions or change in slip direction.
For a single contact $C = \Braces c$, \citet{Routh91} proposed a graphical method describing a path in velocity space (equivalently impulse space) from $\Velocity_-$ to $\Velocity_+$ which satisfies Coulomb friction differentially.
To briefly summarize this technique,
\begin{enumerate}
    \item Increase the normal impulse $\NormalImpulse[c]$ with slope $\NormalForce[c]$.\label{item:routhnormalstep}
    \item Increment the tangential impulse $\FrictionImpulse[c]$ with slope $\FrictionForce[c]$, satisfying to Coulomb friction, identical to \eqref{eq:maximumdissipation}
    for the mid-impact velocity $\bar{\Velocity}=\Velocity_- + \Mass(\Configuration)^{-1}\J[c]^T\Impulse[c]$, the velocity after net impulse $\Impulse[c]$.\label{item:routhfrictionstep}
    \item Terminate when the normal contact velocity vanishes\footnote{To permit resolutions to Painlev\'e's Paradox, terminate only when consistency no longer requires an instantaneous change in velocity.} (i.e. $\J[N,c]\bar{\Velocity}=0$) and take $\Velocity_+ = \bar \Velocity$.\label{item:routhtermination}
\end{enumerate}
To later proceed to the multi-contact case, we observe that this process could be modeled as a u.s.c. differential inclusion:
\begin{equation}
	\dot \Velocity \in \DerivativeMap[c](\Velocity) = \begin{cases}
		\Braces\ZeroVector  & \Velocity \in \InactiveSet[c] \,,\\
		\NetForce[c](\Velocity) & \Velocity \in \ActiveSet[c]\,, \\
		\Hull \Parentheses{\Braces\ZeroVector \cup \NetForce[c](\Velocity)} & \Otherwise\,.
	\end{cases}\label{eq:routhsingle}
\end{equation}
where $F_c(\Velocity)$ is equal to the net increment in velocity due to the ``force'' applied in steps \ref{item:routhnormalstep}) and \ref{item:routhfrictionstep}) of Routh's method. Since $\Configuration$ is constant during an impact, we will apply the transformation $\Mass(\Configuration)^{.5}$ to $\Velocity$ in \eqref{eq:routhsingle}, leaving
\begin{equation}
	\NetForce[c]\Parentheses{\Velocity} = \Jn[c]^T - \FrictionCoeff[c] \Jt[c]^T \Unit\Parentheses{\Jt[c] \Velocity}\,,\label{eq:singlefrictional}
\end{equation}
where we retain the use of $\Velocity$ for ease of notation. For any  $\dot \Velocity \in \NetForce[c](\Velocity)$, we can associate a set of forces $\Force[C]$ such that
\begin{equation}
	\dot \Velocity = \J[c]^T\Force[c]\,, \qquad \NormalForce[c] = 1 \,, \qquad  \Force[c] \in \FrictionCone[c](\Configuration,\Velocity)\,.\label{eq:forceequivalence}
\end{equation}
Note that for a frictionless contact ($\FrictionCoeff[c] = 0$), this simplifies to
\begin{equation}
	\NetForce[c]\Parentheses{\Velocity} = \Braces{\Jn[c]^T}\,.\label{eq:singlefrictionless}
\end{equation}
A diagram depicting the resolution of a potential planar impacts is shown in Figure~\ref{fig:routh}.
Solutions may transition between sliding and sticking, and the direction of slip may even reverse as a result of each impact.
While the path is piecewise linear in the planar case, this is not true in three dimensions.

From this point forward, we will take $s$ to be the ``simulation time'' during the resolution of an impact event; we note that evolution of $s$ \emph{does not correspond} to evolution of time, but rather measures the accumulation of impact impulse over an \emph{instantaneous} collision. In a slight abuse of notation, and we will consider total derivatives such as $\dot \Velocity(s)$ to be taken with respect to $s$.
We will also denote the impulse (i.e. the integrated force) on a contact $c$ over a sub-interval $[s_1,s_2]$ of an impact resolution as $\Impulse[c]\Parentheses{s_1,s_2}$.
Implicit in Routh's method is an assumption that the terminal condition in step \ref{item:routhtermination}) will eventually be reached by any valid choice of increment on $\Impulse[c]$; if it is possible to get ``stuck'' with $\Jn[c]\Velocity < 0$, then Routh's method would be ill-defined and not predict a post impact state. This does not happen in the frictionless case, as $\Jn[c]\Velocity$ has constant positive derivative $\Jn[c]\dot \Velocity = \TwoNorm{\Jn[c]}^2$.
The frictional case requires more careful treatment. Intuitively, the added effect of the frictional impulse will be to dissipate kinetic energy quickly. One may conclude that termination happens eventually as zero velocity is a valid post-impact state:
\begin{lemma}\label{lem:singlefrictional}
$\exists S>0$ such that for any solution $\Velocity(s) \in \SolutionSet{\DerivativeMap[c]}[\Brackets{0,\TwoNorm{\Velocity(0)}S}]$ of the single frictional contact system defined in (\ref{eq:routhsingle}) and (\ref{eq:singlefrictional}), $\exists s^* \in \Brackets{0,\TwoNorm{\Velocity(0)}S}$, $\Jn[c]\Velocity(s^*) \geq 0$.
\end{lemma}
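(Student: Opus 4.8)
The plan is to track the kinetic energy, equivalently the speed $\TwoNorm{\Velocity}$, and to show it is dissipated at a rate bounded away from zero while the contact is penetrating. The crucial structural fact is that $\Unit\Parentheses{\Jt[c]\Velocity}$, and hence the increment $\NetForce[c](\Velocity)$, depends only on the direction $\Direction{\Velocity}$ and not on the magnitude of $\Velocity$. Consequently the rate of change of $\TwoNorm{\Velocity}$ along a penetrating solution is a function of $\Direction{\Velocity}$ alone, so it can be minimized over the compact set of admissible directions to produce a single geometric constant; this same scale-invariance is what makes both the admissible horizon and the target time proportional to $\TwoNorm{\Velocity(0)}$.

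First I would dispose of the trivial cases: if $\Jn[c]\Velocity(0) \geq 0$ take $s^* = 0$, and whenever the solution meets $\Jn[c]\Velocity = 0$ or enters $\InactiveSet[c]$ we are done. So assume $\Velocity(0) \in \ActiveSet[c]$ and let $[0,\tau)$ be the maximal initial interval on which $\Velocity(s) \in \ActiveSet[c]$, where $\dot{\Velocity} \in \NetForce[c](\Velocity)$. Differentiating the speed and substituting $u = \Direction{\Jt[c]\Velocity} \in \Unit\Parentheses{\Jt[c]\Velocity}$ gives, almost everywhere,
\[
\frac{d}{ds}\TwoNorm{\Velocity} \;=\; \frac{\Velocity^T\dot{\Velocity}}{\TwoNorm{\Velocity}} \;=\; \Jn[c]\theta - \FrictionCoeff[c]\TwoNorm{\Jt[c]\theta}\,, \qquad \theta = \Direction{\Velocity}\,.
\]
Both terms are nonpositive on the penetrating set $\Jn[c]\theta < 0$, recovering the stated intuition that the impact strictly dissipates energy; the remaining task is to make this rate uniformly negative.

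The main obstacle is exactly that uniform bound, and it forces one genuinely necessary correction: the increment $\dot{\Velocity}$ lies in the row space of $\J[c]$, so any component of $\Velocity$ in the null space of $\J[c]$ is conserved and neither decays nor influences $\Jn[c]\Velocity$. I would therefore run the argument on the projection $\tilde{\Velocity}$ of $\Velocity$ onto the row space of $\J[c]$; since $\dot{\tilde{\Velocity}} = \dot{\Velocity}$ while $\Jn[c]\tilde{\Velocity} = \Jn[c]\Velocity$ and $\Jt[c]\tilde{\Velocity} = \Jt[c]\Velocity$, the identity above holds verbatim for $\tilde{\Velocity}$ with $\theta = \Direction{\tilde{\Velocity}}$. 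The function $h(\theta) = \Jn[c]\theta - \FrictionCoeff[c]\TwoNorm{\Jt[c]\theta}$ is continuous on the compact set of unit directions $\theta$ in the row space with $\Jn[c]\theta \leq 0$, and strictly negative there: $h(\theta) = 0$ would require $\Jn[c]\theta = 0$ and $\Jt[c]\theta = 0$, i.e.\ $\J[c]\theta = \ZeroVector$, which is impossible for a unit $\theta$ in the row space because $\J[c]$ restricted to its row space is injective. Compactness then delivers $\delta > 0$ with $h \leq -\delta$, a purely geometric constant.

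Finally I would integrate. On $[0,\tau)$ we obtain $\frac{d}{ds}\TwoNorm{\tilde{\Velocity}} \leq -\delta$ almost everywhere, and since the projection is nonexpansive this yields $\TwoNorm{\tilde{\Velocity}(s)} \leq \TwoNorm{\Velocity(0)} - \delta s$. Hence penetration cannot persist past $s = \TwoNorm{\Velocity(0)}/\delta$: either the solution first reaches $\Jn[c]\Velocity = 0$ at some $s^* \leq \TwoNorm{\Velocity(0)}/\delta$, or $\TwoNorm{\tilde{\Velocity}}$ vanishes first, in which case $\Jn[c]\Velocity(s^*) = \Jn[c]\tilde{\Velocity}(s^*) = 0$. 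Taking $S = 1/\delta$ places $s^*$ in $\Brackets{0,\TwoNorm{\Velocity(0)}S}$ and completes the argument. The only delicate bookkeeping is at sticking instants $\Jt[c]\Velocity = \ZeroVector$, where $\Unit$ is set-valued but the tangential term $(\Jt[c]\Velocity)^T u$ vanishes for every admissible selection $u$, so the speed identity remains valid almost everywhere.
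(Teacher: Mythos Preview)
Your argument is correct and is essentially the paper's own proof: both project onto the row space of $\J[c]$, compute that the derivative of the projected speed equals $\Jn[c]\theta - \FrictionCoeff[c]\TwoNorm{\Jt[c]\theta}$, and bound this uniformly away from zero using injectivity of $\J[c]$ on its row space (you via compactness of the unit sphere, the paper via the equivalent observation that $\Norm{\Jn[c]\Velocity}_1 + \TwoNorm{\Jt[c]\Velocity}$ is a norm on that subspace). The only cosmetic difference is that the paper tracks $V=\TwoNorm{\vect R^T\Velocity}^2$ and obtains $\dot V \leq -2\epsilon\min(\FrictionCoeff[c],1)\sqrt{V}$, whereas you differentiate the speed directly; your $\delta$ is exactly the paper's $\epsilon\min(\FrictionCoeff[c],1)$.
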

\begin{proof}
See Appendix \ref{adx:singlestopproof}.
\end{proof}

The implication of Lemma \ref{lem:singlefrictional} is that \emph{a priori}, one can determine an $S > 0$ proportional to the pre-impact velocity $\Velocity_-$ such that any solution to the differential inclusion \eqref{eq:routhsingle} on $\Brackets{0,S}$ can be used to construct the post-impact velocity $\Velocity_+$. We will see, however, that the extension of this methodology to multiple concurrent impacts is non-trivial, and that the physicals systems associated with these models often exhibit a high degree of indeterminacy.

\section{Simultaneous Impact Model}
\label{section:model}
\subsection{Motivating Examples}
\begin{figure}[h]
    \centering
    \begin{subfigure}[b]{.18\textwidth}
        \raisebox{1cm}{\includegraphics[width=\hsize]{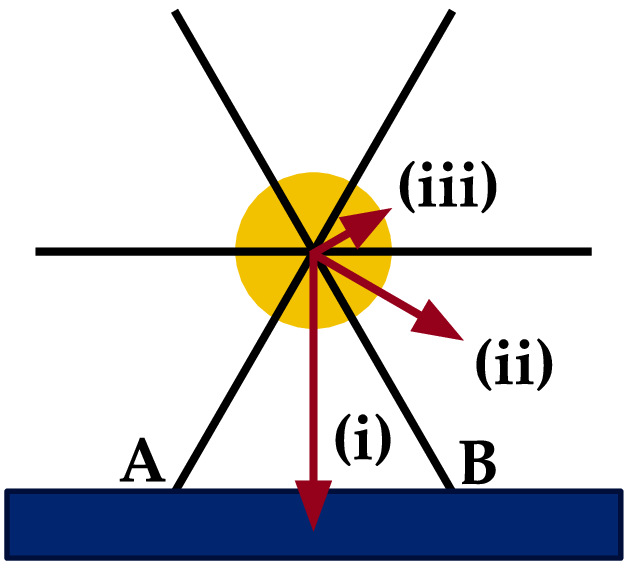}}
        \caption{\label{fig:rimless_cartoon}}
    \end{subfigure}       
    \begin{subfigure}[b]{.3\textwidth}
    	\vspace{2mm}
        \includegraphics[width=\hsize]{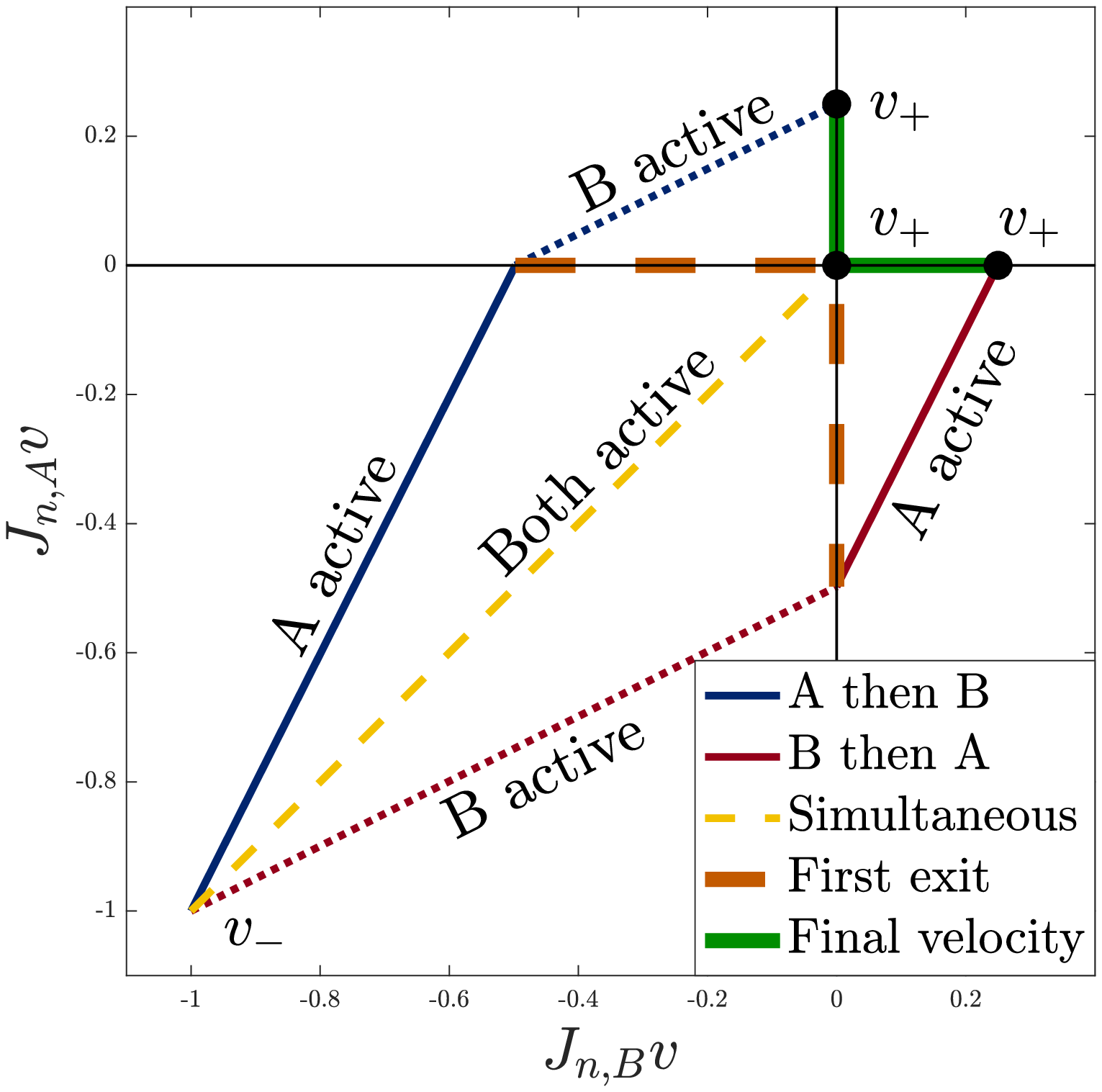}
        \caption{\label{fig:rimless_example}}
    \end{subfigure} 
    \caption{(\subref*{fig:rimless_cartoon}) Possible impact resolution for the rimless wheel with initial downward vertical velocity (i).
    A sticking impact at contact A is resolved first (ii), causing a secondary impact at B (iii).
        (\subref*{fig:rimless_example}) Impact solutions in contact \textit{normal} coordinates. Sequential resolution results one foot lifting off the ground, while simultaneous resolution results in pure sticking.
        The model defined in Section \ref{section:model} allows concurrent impacts until exiting quadrant III on the dashed orange set.
        Final post-impact velocities are shown in solid green.
    }
    \label{fig:rimless}
\end{figure}
\label{section:examples}
We include, as motivation, two common robotics examples that exhibit simultaneous impacts: one related to legged locomotion and the other to manipulation.
Both examples, depending on initial conditions and model properties, can exhibit non-uniqueness.
Before describing our model in full detail, we present these examples by considering the outcome of applying Routh's method to a single contact at a time.  

\subsubsection{Rimless Wheel}

\begin{figure*}[h]
    \center        
    \begin{subfigure}[b]{.33\textwidth}
        \includegraphics[width=\hsize]{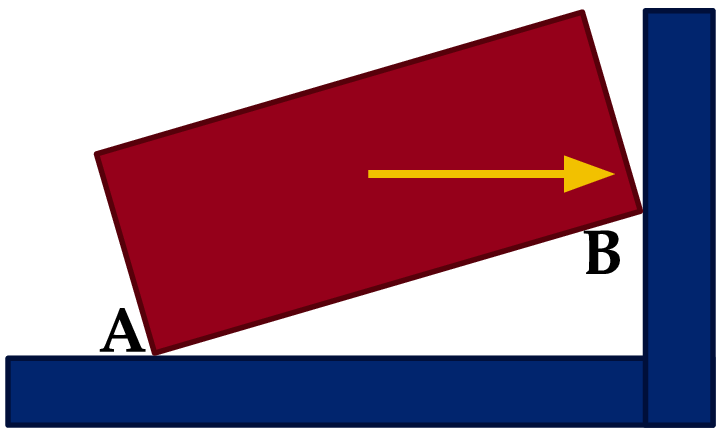}
        \caption{\label{fig:pushing_cartoon}}
    \end{subfigure}
    \begin{subfigure}[b]{.3\textwidth}
    	\vspace{2mm}
        \includegraphics[width=.9\hsize]{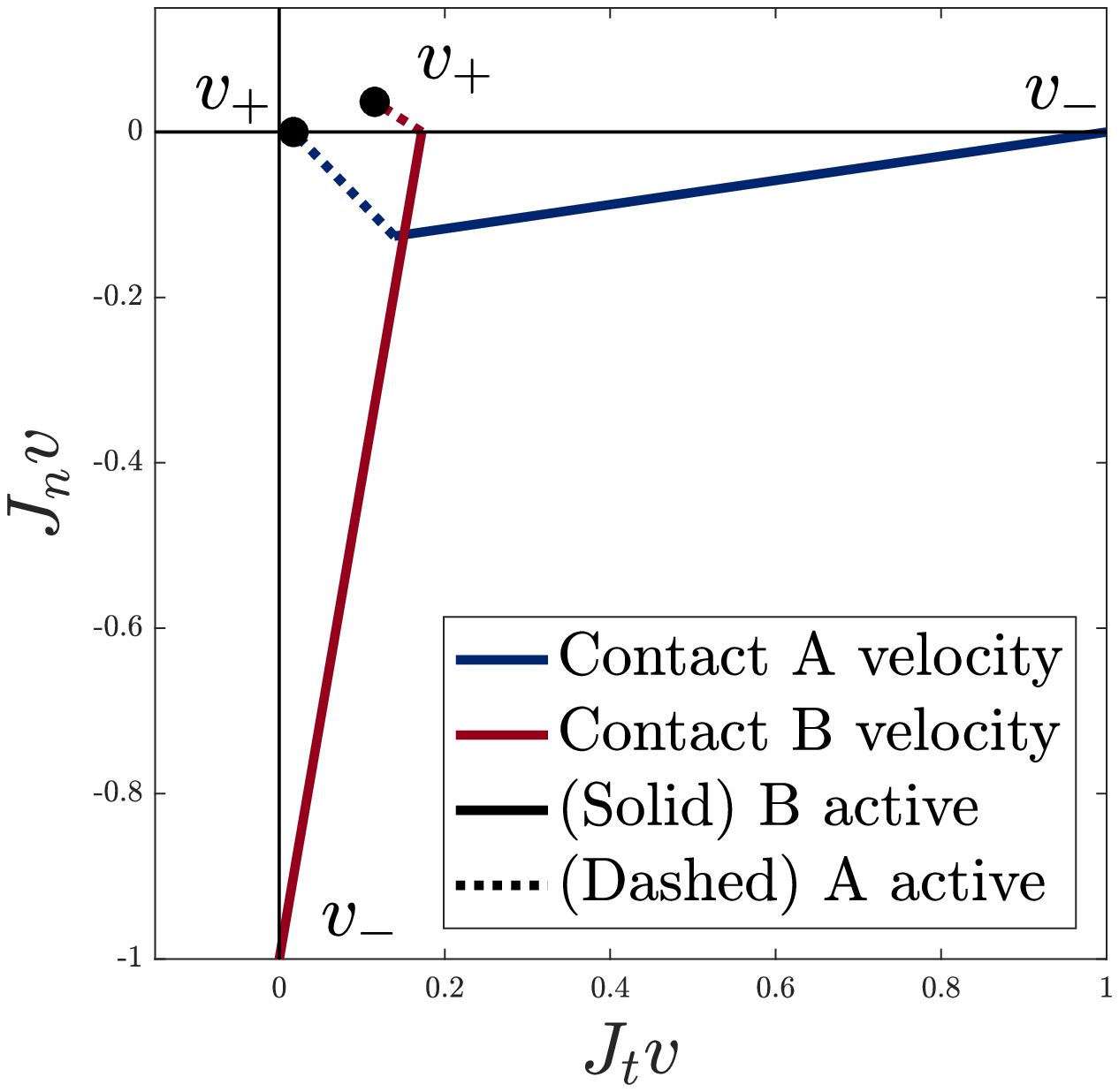}
        \caption{\label{fig:pushing_example_1}}
    \end{subfigure}
    \begin{subfigure}[b]{.3\textwidth}
    \vspace{2mm}
        \includegraphics[width=.9\hsize]{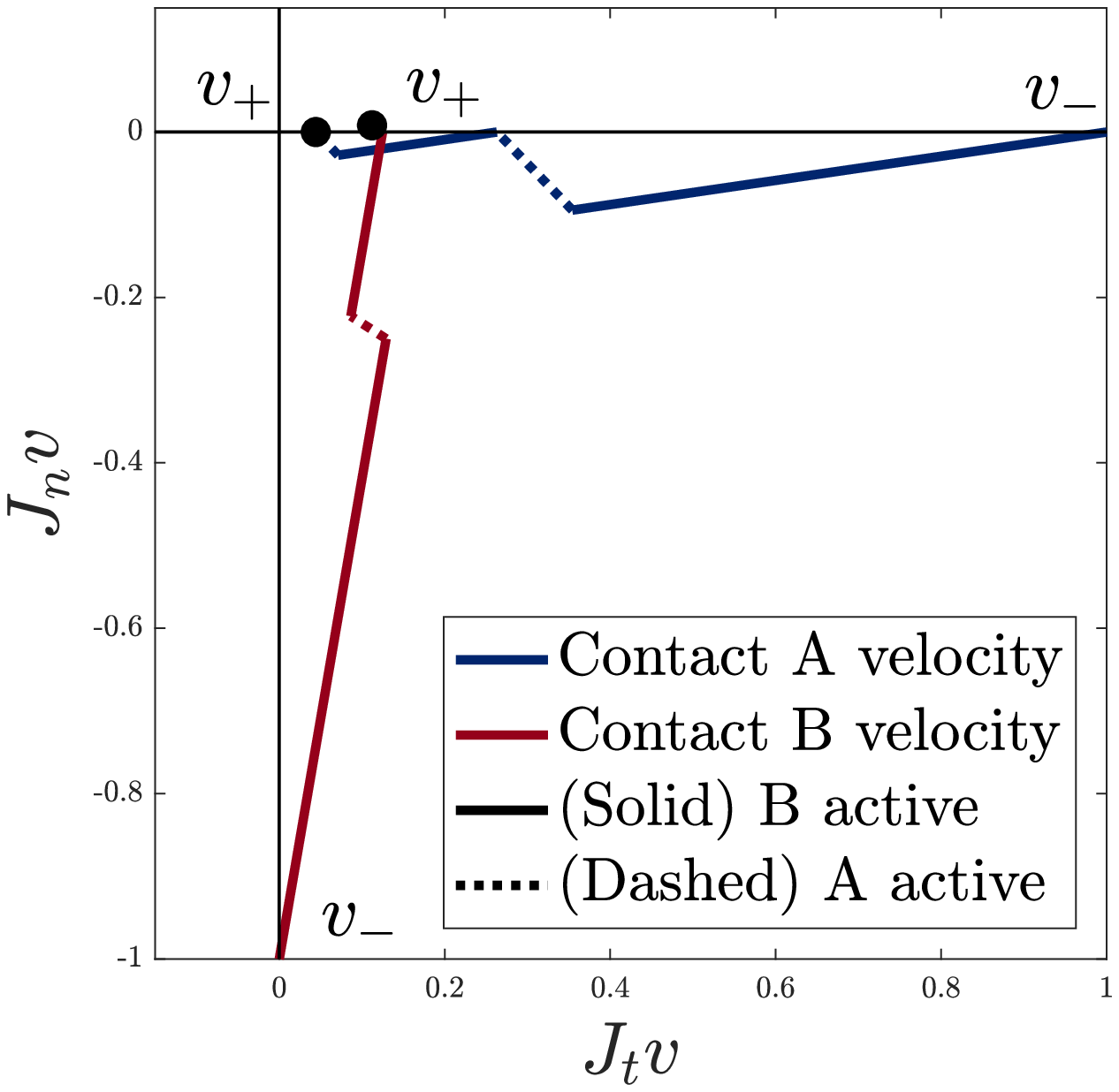}
        \caption{\label{fig:pushing_example_2}}
    \end{subfigure}
    \caption{Two subtly different solutions to a planar motion (\subref*{fig:pushing_cartoon}) are shown. 
        (\subref*{fig:pushing_example_1}) The box slides, with friction, to the right along the bottom (A) contact, before a frictionless impact on the right surface at B. This impact induces a second impact at A.
        (\subref*{fig:pushing_example_2}) Here, the incremental impulse switches to B \textit{before} the first impact terminates.}
    \label{fig:pushing}
    \vspace{-5mm}
\end{figure*}

The rimless wheel is a commonly used description of simple robotic walking \cite{Coleman97}. 
Here, we will analyze the case where two feet contact the ground.
This can occur if the robot were to fall on two feet, simultaneously, or when one foot is in sustained ground contact and the other impacts the ground.
Note that this example is not limited to a legged robot with locked hip and knee joints; see \cite{Remy2017} for a thorough analysis of similar legged examples.

For a simple example, illustrated in Fig.~\ref{fig:rimless}, we assume that both feet strike the ground vertically, with friction sufficient to sustain sticking.
In this case, existing simulation schemes (\cite{Anitescu97, Stewart1996a} and others) predict that equal impulses are generated on both feet, brining the robot to rest immediately.
However, as illustrated in the figure, if the contacts are sequenced one at a time, other post-impact states are possible where one leg separates from the ground.
For other configurations of this problem, non-unique solutions exist spanning sticking, sliding, and separation all for a single initial condition.

%


\subsubsection{Nonprehensile Pushing}

In this second example, motivated by nonprehensile pushing of an object, we take a box-like object (Fig.~\ref{fig:pushing}) to have one corner sliding along a surface before impacting a frictionless second surface.
Here, the impact on the right wall causes a secondary, frictional impact against the lower wall.

If the first impact is taken to termination before activating the contact on the right wall, the solution in Fig.~\ref{fig:pushing_example_1} is discovered. Here, the bottom contact is separating and the right contact is sliding upward.
Instead, if the impact switches prior to termination, shown in Fig.~\ref{fig:pushing_example_2}, a slightly different solution emerges. 
This example illustrates that, in simple cases, reminiscent of common robotics applications, subtly different non-unique solutions can emerge from multiple contacts.

\subsection{Model Construction}
As post-impact velocity is sensitive to the ordering of individual impact resolutions, if we would like to predict as many reasonable post-impact velocities as possible, we must use as relaxed of a notion of impact resolutions as possible. 
A similar model, without theoretical results or a detailed understanding, was proposed by \citet{Posa14} where it proved useful for stability analysis of robots undergoing simultaneous impact.
We consider a formulation in which at any given instant during the resolution process, the impacts are allowed to concurrently resolve at \textit{any} relative rate:
\begin{enumerate}
	\item Monotonically increase the normal impulse on each non-separating contact $c$ at rate $\NormalForce[c] \geq 0 $ such that
		\begin{equation}
			\sum_{c \in C} \NormalForce[c] = \Norm{\NormalForce[C]}_1 = 1\,.\label{eq:convexforcecombination}
		\end{equation}
	\item Increment the tangential impulse for each frictional contact $c$ at rate $\FrictionForce[c]$ such that $
			\Force[C] \in \FrictionCone[C](\Velocity)\,.
		$
	\item Terminate when $\Velocity \not \in \ActiveSet[C]$.
\end{enumerate}
We can understand the constraint (\ref{eq:convexforcecombination}) on $\Force[C]$ as choosing a net force that comes from a convex combination of the forces that Routh's method might select for any of the individual contacts $c \in C$. As in the single contact case, we might instead think of the selection of a $\Force[C]$ as picking an element of a set of admissible values for $\dot \Velocity$. As before, we construct a u.s.c. differential inclusion to capture this behavior:
\begin{equation}
	\NetForce[C]\Parentheses{\Velocity} = \Hull \Parentheses{\Braces{f_c : c \in C, f_c \in \NetForce[c](\Velocity), \Velocity \in \Closure \ActiveSet[c] }}\,, \label{eq:permissiblenetforces}
\end{equation}
\begin{align}
	\DerivativeMap[C](\Velocity) = \begin{cases}
		\Braces\ZeroVector  & \Velocity \in \InactiveSet[C]\,, \\
		\NetForce[C]\Parentheses{\Velocity} & \Velocity \in \ActiveSet[C]\,, \\
		\Hull \Parentheses{\Braces\ZeroVector \cup \NetForce[C](\Velocity)} & \Otherwise\,.
	\end{cases}\label{eq:multicontact}
\end{align}
We denote total impulse over an interval $[s_1,s_2]$, $\Impulse[C]\Parentheses{s_1,s_2}$, as before. Similar to (\ref{eq:forceequivalence}), one can extract $\Force[C](s)$ from a solution $\Velocity(s)$ such that
\begin{equation}
	\dot \Velocity = \J[C]^T\Force[C], \qquad \Norm{\NormalForce[C]}_1 = 1\,, \qquad \Force[C] \in \FrictionCone[C](\Configuration,\Velocity)\,.\label{eq:ForceDerivative}
\end{equation}
We illustrate the behavior of this model on the rimless wheel in Figure \ref{fig:rimless_example}. While $\Velocity$ remains in the third quadrant, the direction of $\dot \Velocity$ is permitted to take any value in the convex cone outlined by the sold blue and dotted red solutions, including the simultaneous impact solution. This results in the velocity terminating at least one impact on the dashed orange set, after which behavior is identical to the single-contact system. The final velocities achievable, shown in solid green, are a superset of those given by sequential and simultaneous resolution.
\subsection{Properties}
The construction of (\ref{eq:multicontact}) is similar to that of the single contact system (\ref{eq:routhsingle}); it is furthermore equivalent when $C$ is a singleton. We now detail properties of the multi-contact system that are useful for analyzing its solution set. 
\subsubsection{Existence and Closure}
For any $C \in \ContactSystems$, $\DerivativeMap[C](\Velocity)$ is closed, uniformly bounded, and convex as it is constructed from the convex hull of a set of bounded vectors. Therefore by Proposition \ref{prop:closure}, we obtain the following: 
\begin{lemma}\label{lem:NonEmptyClosure}
	For all $C \in \ContactSystems$, velocities $\Velocity_0$, and compact intervals $I$, $\SolutionSet{\DerivativeMap[C]}[I]$ and $\IVP{\DerivativeMap[C]}{\Velocity_0}{I}$ are non-empty and closed under uniform convergence.
\end{lemma}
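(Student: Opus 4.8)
The plan is to verify that $\DerivativeMap[C]$ satisfies every hypothesis of Proposition~\ref{prop:closure}---uniform boundedness; upper semi-continuity; and closed, convex, and non-empty values---so that both conclusions follow by direct application. Convexity is immediate from the outer convex hull in \eqref{eq:multicontact} and \eqref{eq:permissiblenetforces}; uniform boundedness follows because each $\NetForce[c]\Parentheses{\Velocity}$ lies in the fixed ball of radius $\TwoNorm{\Jn[c]^T} + \FrictionCoeff[c]\TwoNorm{\Jt[c]^T}$ (since $\Unit$ takes values in $\Closure\Ball[1]$), and neither the finite union, the convex hull, nor adjoining $\ZeroVector$ enlarges this bound beyond a $C$-dependent constant. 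Closedness will follow once the values are shown to be compact. Thus the real content lies in upper semi-continuity and non-emptiness, which I would establish by building the map up from $\Unit$.

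First I would record that each single-contact map $\NetForce[c]$ is u.s.c. with compact values: by \eqref{eq:singlefrictional} it is the image of the u.s.c., compact-valued map $\Unit$ under pre-composition with the continuous linear map $\Velocity \mapsto \Jt[c]\Velocity$ and post-composition with the continuous affine map $x \mapsto \Jn[c]^T - \FrictionCoeff[c]\Jt[c]^T x$, both of which preserve upper semi-continuity and compactness of values. Finite unions of compact-valued u.s.c. maps are again u.s.c. with compact values, and taking the convex hull preserves both properties in $\VelocitySpace$ (Carath\'eodory keeps the hull of a compact set compact). Hence, on any \emph{fixed} index set, $\Velocity \mapsto \Hull\Parentheses{\bigcup_c \NetForce[c]\Parentheses{\Velocity}}$ is u.s.c. with compact, convex values, which also settles closedness.

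The main obstacle is the piecewise gluing in \eqref{eq:multicontact}, where both the branch and the active index set $\Lambda(\Velocity) = \Braces{c \in C : \Velocity \in \Closure\ActiveSet[c]}$ vary with $\Velocity$. I would verify the u.s.c. condition directly: given $\Velocity_n \to \Velocity$ and $b_n \to b$ with $b_n \in \DerivativeMap[C]\Parentheses{\Velocity_n}$, split on the branch containing $\Velocity$. On the open set $\InactiveSet[C]$ the value is locally $\Braces\ZeroVector$, so $b = \ZeroVector$; on the open set $\ActiveSet[C]$ the value is locally $\NetForce[C]$. The crucial observation is that $\Jn[c]\Velocity > 0$ is a stable strict inequality, so $\Lambda(\Velocity_n) \subseteq \Lambda(\Velocity)$ for all large $n$; passing to the limit can therefore only drop contacts, and u.s.c. of the fixed-index map on $\Lambda(\Velocity)$ places $b$ in $\NetForce[C]\Parentheses{\Velocity}$. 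In the ``otherwise'' branch the value contains $\ZeroVector$, which absorbs any limits arriving from the $\InactiveSet[C]$ side, while limits arriving from $\ActiveSet[C]$ or from within the branch land in $\NetForce[C]\Parentheses{\Velocity} \subseteq \Hull\Parentheses{\Braces\ZeroVector \cup \NetForce[C]\Parentheses{\Velocity}}$ by the same index-containment argument. Non-emptiness holds branchwise: the $\InactiveSet[C]$ and ``otherwise'' branches contain $\ZeroVector$, and on $\ActiveSet[C]$ some contact is strictly penetrating, so $\Lambda(\Velocity)$ is non-empty and $\NetForce[C]\Parentheses{\Velocity} \neq \emptyset$. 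With all hypotheses of Proposition~\ref{prop:closure} verified, its conclusion yields non-emptiness and closure under uniform convergence of both $\SolutionSet{\DerivativeMap[C]}[I]$ and $\IVP{\DerivativeMap[C]}{\Velocity_0}{I}$.
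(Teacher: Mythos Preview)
Your proposal is correct and follows the same approach as the paper: verify the hypotheses of Proposition~\ref{prop:closure} and invoke it directly. The paper's own argument is the single sentence preceding the lemma (``$\DerivativeMap[C](\Velocity)$ is closed, uniformly bounded, and convex as it is constructed from the convex hull of a set of bounded vectors''), asserting upper semi-continuity earlier in the text without proof; you have supplied the detailed verification---particularly the index-containment argument $\Lambda(\Velocity_n)\subseteq\Lambda(\Velocity)$ across the piecewise branches---that the paper leaves implicit.
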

\subsubsection{Homogeneity}
As each $\ActiveSet[C]$ and $\InactiveSet[C]$ are conic, $\NetForce[C](\Velocity)$ and therefore $\DerivativeMap[C](\Velocity)$ are positively homogeneous in $\Velocity$. That is to say, $\forall k>0,\Velocity \in \VelocitySpace$, $\DerivativeMap[C](\Velocity) = \DerivativeMap[C](k\Velocity)$. Positive homogeneity induces a similar property on $\SolutionSet{\DerivativeMap[C]}[I]$:
\begin{lemma}[Solution Homogeneity]\label{lem:homogeneity}
	For all $C \in \ContactSystems$, $k > 0$, and compact intervals $I$, if $\Velocity(s) \in \SolutionSet{\DerivativeMap[C]}[I]$, $k\Velocity(\frac{s}{k}) \in \SolutionSet{\DerivativeMap[C]}[kI]$.
\end{lemma}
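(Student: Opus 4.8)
The plan is to take the candidate reparametrized trajectory $\vect w(s) = k\,\Velocity\!\left(\tfrac{s}{k}\right)$, defined for $s \in kI$, and verify directly that it meets the two requirements in the definition of a solution: absolute continuity, and the differential inclusion $\dot{\vect w}(s) \in \DerivativeMap[C](\vect w(s))$ holding $a.e.$ on $kI$. Since the map $s \mapsto s/k$ is an affine bijection between $kI$ and $I$ with constant nonzero derivative, everything reduces to the chain rule together with the positive homogeneity of $\DerivativeMap[C]$ established immediately above.

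First I would establish absolute continuity. The function $\Velocity$ is absolutely continuous on $I$ by hypothesis; precomposing with the Lipschitz affine map $s \mapsto s/k$ and then scaling by the constant $k$ preserves absolute continuity, so $\vect w$ is absolutely continuous on $kI$ and is therefore the anti-derivative of its $a.e.$-defined total derivative. Applying the chain rule wherever $\Velocity$ is differentiable gives
\begin{equation}
	\dot{\vect w}(s) = k \cdot \dot\Velocity\!\left(\tfrac{s}{k}\right)\cdot \tfrac{1}{k} = \dot\Velocity\!\left(\tfrac{s}{k}\right)\,.
\end{equation}

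It then remains to transfer the inclusion. Because $\Velocity \in \SolutionSet{\DerivativeMap[C]}[I]$, there is a null set $N \subseteq I$ off of which $\dot\Velocity(u) \in \DerivativeMap[C](\Velocity(u))$; as $u \mapsto ku$ carries null sets to null sets, $kN \subseteq kI$ is null as well, and for every $s \in kI \setminus kN$ we have $\dot{\vect w}(s) = \dot\Velocity(s/k) \in \DerivativeMap[C](\Velocity(s/k))$. Finally, invoking the positive homogeneity of $\DerivativeMap[C]$ — namely $\DerivativeMap[C](\Velocity(s/k)) = \DerivativeMap[C](k\,\Velocity(s/k)) = \DerivativeMap[C](\vect w(s))$ — yields $\dot{\vect w}(s) \in \DerivativeMap[C](\vect w(s))$ for $a.e.$ $s \in kI$, completing the verification.

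The argument is essentially mechanical, so I do not anticipate a serious obstacle; the only point requiring care is the measure-theoretic bookkeeping, namely ensuring that the set where the $a.e.$ inclusion may fail is mapped to a null set under the change of variables (which holds because the reparametrization is bi-Lipschitz) and that the chain-rule identity for $\dot{\vect w}$ is valid $a.e.$ rather than pointwise everywhere.
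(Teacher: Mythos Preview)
Your proposal is correct and follows essentially the same approach as the paper's intended argument: use the chain rule on the reparametrization $s \mapsto s/k$, note that the exceptional null set scales to a null set, and invoke the positive homogeneity $\DerivativeMap[C](\Velocity) = \DerivativeMap[C](k\Velocity)$ to close the inclusion. Your version is in fact slightly more explicit about the absolute-continuity requirement, which the paper leaves implicit.
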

\subsubsection{Equivalent Minimal Coordinate Systems}
In light of (\ref{eq:ForceDerivative}), we have that $\Velocity(s) - \Velocity(s_0) \in \RangeSpace{\J[C]^T}$ for all solutions $\Velocity(s) \in \SolutionSet{\DerivativeMap[C]}[I]$. It will be useful to analyze the the evolution of a minimal-coordinate representation of $\Velocity$'s projection onto $\RangeSpace{\J[C]^T}$. Let $\vect R$ be a matrix with columns that constitute an orthogonal basis of $\RangeSpace{\J[C]^T}$. Therefore, $\vect R \vect R^T$ is an orthogonal projector onto $\RangeSpace{\J[C]^T}$ and
\begin{align}
	\J[C]\Velocity &= \Parentheses{\J[C]\vect R}\Parentheses{\vect R^T \Velocity}\,,\\
	\TotalDiff{}{t} \Parentheses{\vect R^T \Velocity} &= \Parentheses{\J[C] \vect R}^T \Force[C]\, a.e.
\end{align}
Therefore, by defining a new set of contacts $Q$ with equal size to $C$ such that $\J[Q] = \J[C]\vect R$, we have that
\begin{align}
	\Velocity(s) \in \SolutionSet{\DerivativeMap[C]}[I] &\iff \vect R^T\Velocity(s) \in \SolutionSet{\DerivativeMap[Q]}[I]\,,\\
	\Velocity \in \ActiveSet[C] &\iff \vect R^T \Velocity \in \ActiveSet[Q]\,,\\
	\J[Q]\Velocity = \ZeroVector &\iff \Velocity = \ZeroVector\,.\label{eq:Jfullrank}
\end{align}
We denote the collection of contact sets of this size that comply with the full rank condition (\ref{eq:Jfullrank}) as
\begin{equation}
	\MinimalContactSystems{m}{k} = \Braces{Q \in \SizedContactSystems{m}{k} : \J[Q] \text{ full rank}}\,.
\end{equation}
Note that $Q \in \MinimalContactSystems{k}{m}$ does not require $\J[Q]$ to have linearly independent rows; $\J[Q]$ may have more rows than columns with enough contacts, and $Q \in \MinimalContactSystems{k}{m}$ would then imply that every perturbation of $\Velocity$ would perturb at least one contact velocity.
\subsubsection{Energy Dissipation}
A basic behavior of inelastic impacts is that they dissipate kinetic energy $K(\Velocity) = \frac{1}{2}\TwoNorm{\Velocity}^2$. We now examine the dissipative properties of the model, which function both as a physical realism sanity check and as a device to prove critical theoretical properties.
On inspection of (\ref{eq:normalseparation}), (\ref{eq:maximumdissipation}) and (\ref{eq:forceequivalence}), $K(\Velocity(s))$ must be non-increasing, and furthermore, unless $\Velocity(s)$ is constant, it will strictly decrease:
\begin{lemma}[Dissipation]\label{lem:dissipate}
	Let $C \in \ContactSystems$, and let $I$ be a compact interval. If $\Velocity(s) \in \SolutionSet{\DerivativeMap[C]}[I]$, then $\TwoNorm{\Velocity(s)}$ is non-increasing.
\end{lemma}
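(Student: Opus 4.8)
The plan is to show that the kinetic energy $K(\Velocity) = \frac{1}{2}\TwoNorm{\Velocity}^2$ is non-increasing along any solution; this is equivalent to the claim, since $\TwoNorm{\Velocity} = \sqrt{2K(\Velocity)}$ is a monotone function of $K(\Velocity)$. Because $\Velocity(s)$ is absolutely continuous on the compact interval $I$, its image lies in a bounded set on which the $C^1$ map $K$ is Lipschitz; hence $s \mapsto K(\Velocity(s))$ is absolutely continuous, and by the chain rule $\TotalDiff{}{s} K(\Velocity(s)) = \Velocity(s)^T \dot\Velocity(s)$ for $a.e.$ $s$. Since $\dot\Velocity(s) \in \DerivativeMap[C](\Velocity(s))$ $a.e.$, it suffices to establish the pointwise inequality $\Velocity^T f \leq 0$ for every $\Velocity$ and every $f \in \DerivativeMap[C](\Velocity)$; integrating this then shows $K$ is non-increasing.

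The next step is to reduce this inequality to a statement about the generating forces. Inspecting the three cases of \eqref{eq:multicontact} and the definition \eqref{eq:permissiblenetforces}, in every case $\DerivativeMap[C](\Velocity) \subseteq \Hull\Parentheses{\Braces{\ZeroVector} \cup \Braces{f_c : c \in C,\ f_c \in \NetForce[c](\Velocity),\ \Velocity \in \Closure\ActiveSet[c]}}$. Because $f \mapsto \Velocity^T f$ is linear, its maximum over this convex hull is attained at an extreme point, so it is enough to check $\Velocity^T \ZeroVector = 0$ and $\Velocity^T f_c \leq 0$ for each generator $f_c$.

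For a generator we write $f_c = \Jn[c]^T - \FrictionCoeff[c]\Jt[c]^T u$ with $u \in \Unit(\Jt[c]\Velocity)$, so that $\Velocity^T f_c = \Jn[c]\Velocity - \FrictionCoeff[c]\Parentheses{\Jt[c]\Velocity}^T u$. The normal term satisfies $\Jn[c]\Velocity \leq 0$ because $\Velocity \in \Closure\ActiveSet[c] = \Braces{\Velocity : \Jn[c]\Velocity \leq 0}$. For the friction term, $\Parentheses{\Jt[c]\Velocity}^T u = \TwoNorm{\Jt[c]\Velocity} \geq 0$ when $\Jt[c]\Velocity \neq \ZeroVector$ (since then $u = \Direction{\Jt[c]\Velocity}$), and it equals $0$ when $\Jt[c]\Velocity = \ZeroVector$; as $\FrictionCoeff[c] > 0$, the whole term is $\leq 0$. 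Thus $\Velocity^T f_c \leq 0$, completing the argument. Physically, this records that the unit normal impulse never adds energy in a separating direction and that friction is dissipative.

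I expect no serious obstacle here: the result is essentially a bookkeeping exercise once the reduction is set up. The only points requiring care are the justification that $K \circ \Velocity$ is absolutely continuous (so the fundamental theorem of calculus applies to its $a.e.$ derivative), the reduction from $\DerivativeMap[C](\Velocity)$ to its generating set via linearity of $\Velocity^T(\cdot)$, and the set-valued edge case $\Jt[c]\Velocity = \ZeroVector$, where $u$ ranges over $\Closure\Ball[1]$ but the friction term still vanishes.
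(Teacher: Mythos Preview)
Your proposal is correct and follows essentially the same approach as the paper, which does not give a detailed proof but simply remarks that, on inspection of \eqref{eq:normalseparation}, \eqref{eq:maximumdissipation}, and \eqref{eq:forceequivalence}, $K(\Velocity(s))$ must be non-increasing. Your argument is precisely the careful unpacking of that remark: reduce to generators of the convex hull, then use $\Jn[c]\Velocity \leq 0$ on $\Closure\ActiveSet[c]$ and $(\Jt[c]\Velocity)^T u = \TwoNorm{\Jt[c]\Velocity} \geq 0$ to bound each term; the only cosmetic point is that the frictionless case (\eqref{eq:singlefrictionless}, effectively $\FrictionCoeff[c]=0$) is covered by the same inequality with the friction term vanishing, so the qualifier ``as $\FrictionCoeff[c] > 0$'' is unnecessary.
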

\begin{theorem}\label{thm:stop}
	Let $C \in \ContactSystems$, and let $I$ be a compact interval. If $\Velocity(s) \in \SolutionSet{\DerivativeMap[C]}[I]$ and $\Velocity\Parentheses{I} \subseteq \ActiveSet[C]$, $\TwoNorm{\Velocity(s)}$ constant implies $\Velocity(s)$ constant.
\end{theorem}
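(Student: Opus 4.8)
The plan is to use the dissipation computation underlying Lemma~\ref{lem:dissipate} to extract, pointwise in $s$, sharp algebraic constraints from the hypothesis that $\TwoNorm{\Velocity(s)}$ is constant, and then to promote these into $\dot\Velocity = \ZeroVector$ almost everywhere by differentiating the contact velocities. First I would observe that $\Velocity(I) \subseteq \ActiveSet[C]$ forces $\DerivativeMap[C](\Velocity(s)) = \NetForce[C]\Parentheses{\Velocity(s)}$, so by (\ref{eq:ForceDerivative}) a.e. $s$ admits a force $\Force[C](s) \in \FrictionCone[C]$ with $\Norm{\NormalForce[C]}_1 = 1$ and $\dot\Velocity = \J[C]^T\Force[C]$. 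Constancy of $\TwoNorm{\Velocity}$ gives $\Velocity^T\dot\Velocity = 0$ a.e., and expanding $\Velocity^T\dot\Velocity = \sum_i \NormalForce[c_i]\Jn[c_i]\Velocity + \sum_j\FrictionForce[c_j]^T\Jt[c_j]\Velocity$ exhibits it as a sum of nonpositive terms, by (\ref{eq:normalseparation}) and (\ref{eq:maximumdissipation}) exactly as in the proof of Lemma~\ref{lem:dissipate}.

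Since a sum of nonpositive terms vanishes only if each term vanishes, I would conclude that for a.e. $s$ and each contact $c$, either $\NormalForce[c](s) = 0$ (in which case $\FrictionForce[c] = \ZeroVector$ as well by (\ref{eq:maximumdissipation}), so $c$ contributes nothing to $\dot\Velocity$), or $\NormalForce[c](s) > 0$ together with $\Jn[c]\Velocity(s) = 0$ and $\Jt[c]\Velocity(s) = \ZeroVector$, i.e. $\J[c]\Velocity(s) = \ZeroVector$. Writing $A(s) = \Braces{c \in C : \NormalForce[c](s) > 0}$ for the force-bearing contacts, this yields two facts holding a.e.: $\dot\Velocity(s) \in \RangeSpace{\J[A(s)]^T}$, because only contacts in $A(s)$ contribute force; and $\J[c]\Velocity(s) = \ZeroVector$ for every $c \in A(s)$.

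The remaining step, which I expect to be the crux, is to convert the \emph{pointwise} sticking condition $\J[c]\Velocity = \ZeroVector$ into a constraint on $\dot\Velocity$. Here I would invoke the standard real-analysis fact that an absolutely continuous function has vanishing derivative almost everywhere on any set on which it is constant (proved at points of differentiability via the Lebesgue density theorem). Applying this componentwise to each absolutely continuous map $s \mapsto \J[c]\Velocity(s)$ on the set $\Braces{s : \J[c]\Velocity(s) = \ZeroVector} \supseteq \Braces{s : \NormalForce[c](s) > 0}$ gives $\J[c]\dot\Velocity(s) = \ZeroVector$ for a.e. $s$ with $c \in A(s)$. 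Intersecting over the finitely many contacts, for a.e. $s$ we get $\J[A(s)]\dot\Velocity(s) = \ZeroVector$, so $\dot\Velocity(s)$ lies in the kernel of $\J[A(s)]$; combined with $\dot\Velocity(s) \in \RangeSpace{\J[A(s)]^T}$ and the fact that a matrix's row space is the orthogonal complement of its kernel, this forces $\dot\Velocity(s) = \ZeroVector$ a.e., and absolute continuity of $\Velocity$ then yields that $\Velocity$ is constant. The main subtlety to handle carefully is the bookkeeping of the various ``almost everywhere'' qualifiers — in particular justifying that the derivative-on-a-level-set lemma applies on the measurable (but possibly irregular) time set where a given contact carries force, and that the finite intersection over contacts preserves the a.e.\ conclusion.
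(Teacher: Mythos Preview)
Your argument is correct, and it takes a genuinely different route from the paper's. The paper proves the contrapositive: assuming $\Velocity$ is non-constant, it fixes a time $s^*$, partitions the contacts by whether $\J[c]\Velocity(s^*)$ vanishes, and uses continuity to find a short interval on which the non-sticking contacts in $A\setminus B$ keep $|\Jn[c]\Velocity|$ or $\TwoNorm{\Jt[c]\Velocity}$ bounded below by some $\epsilon$. A quadratic expansion of $K(\Velocity(s))-K(\Velocity(s^*))$ forces $\Norm{\Impulse[A\setminus B]}_1>0$, and then an integral estimate on $\dot K$ yields $K(\Velocity(s))\leq K(\Velocity(s^*))-\epsilon\Norm{\Impulse[A\setminus B]}_1<K(\Velocity(s^*))$.

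Your approach is direct and purely measure-theoretic: from $\Velocity^T\dot\Velocity=0$ a.e.\ you extract that every force-bearing contact satisfies $\J[c]\Velocity=0$, then invoke the level-set derivative lemma to get $\J[c]\dot\Velocity=0$ a.e.\ on that set, and conclude via $\RangeSpace{\J[A(s)]^T}\cap\NullSpace{\J[A(s)]}=\{\ZeroVector\}$. This is cleaner and avoids the $\epsilon$--$\delta$ interval argument entirely; it also makes transparent \emph{why} $\dot\Velocity$ vanishes (orthogonality of row space and kernel). The paper's version, by contrast, produces an explicit quantitative energy decrement on a concrete interval, which is closer in spirit to the $\DissipationRate(s)$-dissipativity machinery used later, though that connection is not exploited. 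The measurability bookkeeping you flag (measurable selection of $\Force[C]$, finite intersection over contacts of full-measure sets) is routine and does not hide any obstruction.
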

\begin{proof}
See Appendix \ref{adx:stopproof}.
\end{proof}
One might then wonder if $K(\Velocity)$ is strictly decreasing on $\ActiveSet[C]$. One necessary condition would be $\ZeroVector \not\in \DerivativeMap[C](\Velocity^*)$ for every $\Velocity^* \in \ActiveSet[C]$, as otherwise $\Velocity(s) = \Velocity^*$ would be a solution to the differential inclusion. We will denote the collection of contacts that have this property as
\begin{equation}
	\NondegenerateContactSystems = \Braces{N \in \ContactSystems : \ZeroVector 
	\not \in \NetForce[N]\Parentheses{\ActiveSet[N]} }\,.
\end{equation}
Critically, $\NondegenerateContactSystems$ covers most situations in robotics, including grasping and locomotion, with the notable exception being jamming between immovable surfaces.
Sums-of-squares programming \cite{Parrilo03a}, a form of convex optimization, can be used to certify membership in $\NondegenerateContactSystems$.

Theorem \ref{thm:stop} and Lemma \ref{lem:dissipate} have the immediate implication that $K(\Velocity)$ strictly decreases on on $\ActiveSet[N]$ for $N \in \NondegenerateContactSystems$:
\begin{theorem}[Strict Dissipation]\label{thm:strictdissipation}
	Let $N \in \NondegenerateContactSystems$ and $I$ be a compact interval. If $\Velocity(s) \in \SolutionSet{\DerivativeMap[N]}[I]$ and $\Velocity(I) \subseteq \ActiveSet[N]$, $\TwoNorm{\Velocity(s)}$ is strictly decreasing.
\end{theorem}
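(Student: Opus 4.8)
The plan is to argue by contradiction, leveraging Lemma \ref{lem:dissipate}, Theorem \ref{thm:stop}, and the defining property of $\NondegenerateContactSystems$. By Lemma \ref{lem:dissipate}, $\TwoNorm{\Velocity(s)}$ is already non-increasing on $I$, so the only way it can fail to be \emph{strictly} decreasing is for it to be constant on some nondegenerate subinterval. Concretely, if strict decrease fails there exist $s_1 < s_2$ in $I$ with $\TwoNorm{\Velocity(s_1)} = \TwoNorm{\Velocity(s_2)}$, and monotonicity then forces $\TwoNorm{\Velocity(s)}$ to be constant on all of $J = [s_1,s_2]$.

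Next I would restrict attention to $J$. Since $J \subseteq I$ and $\Velocity(I) \subseteq \ActiveSet[N]$, the hypotheses of Theorem \ref{thm:stop} hold on $J$ (taking $C = N$): $\Velocity$ restricted to $J$ is a solution, its image lies in $\ActiveSet[N]$, and $\TwoNorm{\Velocity(s)}$ is constant on $J$. Theorem \ref{thm:stop} then yields that $\Velocity(s)$ is constant on $J$, say $\Velocity(s) \equiv \Velocity^\ast$ with $\Velocity^\ast \in \ActiveSet[N]$.

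Finally I would derive the contradiction from nondegeneracy. Because $\Velocity$ is constant on the positive-length interval $J$, we have $\dot\Velocity(s) = \ZeroVector$ for almost every $s \in J$; but $\Velocity$ is a solution, so $\dot\Velocity(s) \in \DerivativeMap[N](\Velocity(s))$ almost everywhere as well, and on $\ActiveSet[N]$ the inclusion reduces via (\ref{eq:multicontact}) to $\DerivativeMap[N](\Velocity^\ast) = \NetForce[N](\Velocity^\ast)$. Since $J$ has positive measure, at least one such $s$ exists, giving $\ZeroVector \in \NetForce[N](\Velocity^\ast)$ with $\Velocity^\ast \in \ActiveSet[N]$, i.e. $\ZeroVector \in \NetForce[N](\ActiveSet[N])$. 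This contradicts $N \in \NondegenerateContactSystems$, so $\TwoNorm{\Velocity(s)}$ must be strictly decreasing.

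I expect the reasoning itself to be short, as it essentially chains the two preceding dissipation results together. The only places demanding care are the reduction in the first step---ensuring that ``not strictly decreasing'' genuinely produces a constant stretch of positive length rather than a single point of equality---and the final step, where the almost-everywhere inclusion $\dot\Velocity \in \DerivativeMap[N]$ must be combined correctly with $\dot\Velocity = \ZeroVector$ a.e.\ on a set of \emph{positive} measure, so that a genuine point $\Velocity^\ast \in \ActiveSet[N]$ witnessing $\ZeroVector \in \NetForce[N](\ActiveSet[N])$ actually exists and the nondegeneracy hypothesis is triggered.
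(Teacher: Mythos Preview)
Your proposal is correct and matches the paper's approach exactly: the paper does not spell out a separate proof but simply notes that Theorem~\ref{thm:strictdissipation} is an ``immediate implication'' of Lemma~\ref{lem:dissipate} and Theorem~\ref{thm:stop}, and your argument is precisely the natural unpacking of that implication together with the defining property of $\NondegenerateContactSystems$.
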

\section{Finite Time Termination}
\label{section:termination}
While solutions to the underlying differential inclusion are guaranteed to exist in the multi-contact model, we have yet to prove that they terminate, as in Routh's single-contact method. Termination proofs for other simultaneous impact models (e.g. \cite{Anitescu97, Drumwright2010, Seghete2014} and others) exist, but these approaches rely on comparatively limited impulsive behaviors, and thus cannot capture essential non-unique post-impact velocities. We now show that our model exhibits what we understand to be the most permissive guaranteed termination behavior:
\begin{theorem}\label{thm:nondegeneratedissipation}
	For any pre-impact velocity $\Velocity(0)$ for a contact set $N \in \NondegenerateContactSystems$, The differential inclusion (\ref{eq:multicontact}) will resolve the impact by some $S$ proportional to $\TwoNorm{\Velocity(0)}$.
\end{theorem}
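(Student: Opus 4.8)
The plan is to combine a compactness argument with the model's scaling symmetry, reducing the whole statement to a single quantitative estimate on a compact annulus of velocities bounded away from the origin. Throughout, ``resolving the impact'' means reaching $\Velocity \in \Complement{\ActiveSet[N]}$, i.e.\ leaving the open set $\ActiveSet[N]$, as in step~3) of the construction of (\ref{eq:multicontact}). The first reduction is homogeneity: by Lemma~\ref{lem:homogeneity}, if $\Velocity(s)$ solves (\ref{eq:multicontact}) then so does $\tfrac1k\Velocity(ks)$, whose initial speed is $\tfrac1k\TwoNorm{\Velocity(0)}$; and since $\ActiveSet[N]$ is conic, $\tfrac1k\Velocity(ks) \in \Complement{\ActiveSet[N]} \iff \Velocity(ks) \in \Complement{\ActiveSet[N]}$. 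Hence it suffices to bound the resolution time uniformly over unit-speed data $\TwoNorm{\Velocity(0)} = 1$, and the claimed $S \propto \TwoNorm{\Velocity(0)}$ follows by rescaling.

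I would then reduce the uniform unit-speed bound to the following speed-halving estimate (Claim~A): there is $S_0 > 0$ such that every solution with $\TwoNorm{\Velocity(0)} = 1$ satisfies, for some $s \le S_0$, either $\Velocity(s) \in \Complement{\ActiveSet[N]}$ or $\TwoNorm{\Velocity(s)} \le \tfrac12$. Granting Claim~A, I chain it self-similarly: if the impact has not resolved within the first interval of length at most $S_0$, the speed has dropped to at most $\tfrac12$, and applying Lemma~\ref{lem:homogeneity} with $k=2$ to the shifted tail turns it into a fresh unit-speed problem, so the next resolution-or-halving costs at most $S_0/2$ of original time, and so on. As speeds are non-increasing (Lemma~\ref{lem:dissipate}) and halve each epoch, the total resolution time is at most $S_0 \sum_{j \ge 0} 2^{-j} = 2S_0$; if resolution never occurs earlier, continuity forces $\Velocity = \ZeroVector \in \Complement{\ActiveSet[N]}$ at the accumulation time. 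This yields $S = 2S_0\TwoNorm{\Velocity(0)}$.

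It remains to prove Claim~A, where the difficulty concentrates. Suppose it fails: then for each $n$ there is a solution $\Velocity_n$ with $\TwoNorm{\Velocity_n(0)} = 1$ that stays in the \emph{open} set $\ActiveSet[N]$ with $\TwoNorm{\Velocity_n} > \tfrac12$ on $[0,n]$. These are uniformly bounded and, as $\DerivativeMap[N]$ is uniformly bounded, equicontinuous; a diagonal application of Theorem~\ref{thm:Rellich} together with the closure statement of Proposition~\ref{prop:closure} yields a subsequence converging uniformly on compacta to a solution $\Velocity_\infty$ on $[0,\infty)$ with $\Velocity_\infty(s) \in \Closure\ActiveSet[N]$ and $\TwoNorm{\Velocity_\infty(s)} \in [\tfrac12,1]$. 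By Lemma~\ref{lem:dissipate} its speed decreases to some $L \ge \tfrac12$; because (\ref{eq:multicontact}) is autonomous, the shifts $\Velocity_\infty(\,\cdot + m)$ are again solutions, and extracting a further uniform limit produces a solution $\Velocity_*$ of \emph{constant} speed $L > 0$ lying in $\Closure\ActiveSet[N]$. If $\Velocity_*$ ever enters $\ActiveSet[N]$, it does so on an interval $J$, and Theorem~\ref{thm:stop} forces $\Velocity_* \equiv \bar\Velocity$ on $J$ with $\bar\Velocity \in \ActiveSet[N]$; then $\ZeroVector \in \DerivativeMap[N](\bar\Velocity) = \NetForce[N](\bar\Velocity)$, contradicting $N \in \NondegenerateContactSystems$.

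The hard part is excluding the remaining possibility, that $\Velocity_*$ is confined to the termination boundary $\partial\ActiveSet[N] \subseteq \Complement{\ActiveSet[N]}$ for all $s$. This is exactly the structural obstruction responsible for non-uniqueness: writing the dissipation rate via (\ref{eq:ForceDerivative}) as $\Velocity \cdot \dot\Velocity = \sum_c \NormalForce[c]\Jn[c]\Velocity - \sum_c \FrictionCoeff[c]\NormalForce[c]\TwoNorm{\Jt[c]\Velocity}$, it vanishes precisely when the admissible net force is supported on contacts with $\J[c]\Velocity = \ZeroVector$, so it is \emph{not} bounded away from zero on the active annulus and a limit trajectory may graze the neutral set $\Braces{\Velocity : \exists c,\ \J[c]\Velocity = \ZeroVector}$. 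I would resolve this by showing such grazing cannot persist: on any subinterval where $\Velocity_*$ remains on $\partial\ActiveSet[N]$, constant speed forces every admissible derivative to be a zero-dissipation force supported on grazing contacts, and keeping such a contact grazing additionally constrains $\dot\Velocity_*$ to be $\J[c]$-orthogonal; I would argue, using nondegeneracy ($\ZeroVector \notin \NetForce[N](\ActiveSet[N])$) and Theorem~\ref{thm:stop}, that these constraints are incompatible unless $\Velocity_*$ is a constant, already-terminated state, which then contradicts its arising as a uniform limit of non-terminated positive-speed solutions via the upper semicontinuity of $\IVP{\DerivativeMap[N]}{\cdot}{I}$ in the initial datum (Proposition~\ref{prop:closure}). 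Making ``cannot persist'' quantitative — lower-bounding the dissipation accrued on each excursion away from the neutral set so that the speed must drop by a definite amount within $S_0$ — is the step I expect to demand the most care, and is the natural multi-contact analogue of the single-contact estimate behind Lemma~\ref{lem:singlefrictional}.
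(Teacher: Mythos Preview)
Your overall scaffolding—normalize via homogeneity, prove a uniform speed-drop on unit data, then chain geometrically—matches the paper's; your Claim~A is essentially the paper's notion of $\DissipationRate(s)$-dissipativity, and your chaining argument is Lemma~\ref{lem:exit}. The divergence is in how Claim~A is established, and there your argument has a genuine gap.

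The boundary case you flag as hard is where all the content lies, and your proposed resolution does not close. If the limit $\Velocity_*$ is constant at some $\bar\Velocity \in \partial\ActiveSet[N]$, this is a perfectly valid solution of (\ref{eq:multicontact}) because $\ZeroVector \in \DerivativeMap[N](\bar\Velocity)$ on the boundary piece of the definition, and upper semicontinuity of $\IVP{\DerivativeMap[N]}{\cdot}{I}$ is \emph{consistent} with this, not contradicted by it: u.s.c.\ only says limits of solutions are solutions. The contradiction you assert is not there. Nor can an ``excursion'' accounting alone suffice without further structure: near $\partial\ActiveSet[N]$ the instantaneous dissipation rate genuinely degenerates to zero (the paper's $(\epsilon,\Velocity_\epsilon)$ example makes this explicit), and you cannot lower-bound the aggregate dissipation of the approximants $\Velocity^j$ from properties of the limit $\Velocity_*$ alone.

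The paper supplies that missing structure via \emph{induction on the pair $(m,k)$} of total and frictional contacts (Theorem~\ref{thm:incrementaldissipation}), preceded by a reduction to full-rank $\J[C]$ (Theorem~\ref{thm:minimalsufficiency}) so that the constant limit $\Velocity^\infty$ satisfies $\J[C]\Velocity^\infty \neq \ZeroVector$. This dichotomizes the boundary case: either some contact strictly separates ($\Jn[c]\Velocity^\infty > 0$), so for large $j$ the approximants $\Velocity^j$ are actually solutions of the smaller system $C\setminus\{c\}$; or some frictional contact strictly slides ($\Jt[w]\Velocity^\infty \neq \ZeroVector$), so near $\Velocity^\infty$ its friction force lies in a small sector and can be reproduced by three \emph{frictionless} surrogate contacts, reducing $k$. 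In both branches the inductive hypothesis is applied to the \emph{sequence} $\Velocity^j$—not to the limit—and yields the dissipation bound that contradicts $\TwoNorm{\Velocity^j(s)}\to 1$. Your outline has no such reduction mechanism, nor the full-rank step that makes the dichotomy exhaustive; without these, the boundary case is not merely ``demanding care'' but genuinely open in your approach.
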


We will prove this claim as a consequence of kinetic energy decreasing fast enough to force termination---a significant expansion of Theorem \ref{thm:strictdissipation}. Even though $\KineticEnergy$ must always decrease, Theorem \ref{thm:strictdissipation} does not forbid $\TotalDiff{}{s}\KineticEnergy(\Velocity) \StrongConvergence 0$. In fact, it is not possible to create an instantaneous bound $\TotalDiff{}{s}\KineticEnergy(\Velocity) \leq -\epsilon < 0$. For example, consider 2 frictionless, axis-aligned contacts $C$ such that $\J[C] = \Identity_2$. For every $\epsilon > 0$, we can pick a velocity
	\begin{equation}
		\Velocity_\epsilon = (1+\epsilon)\begin{bmatrix}
			-1 \\
			-\epsilon
		\end{bmatrix} \in \ActiveSet[C]\,,
	\end{equation}
	\begin{equation}
		\dot \Velocity_\epsilon = \J[C]^T\begin{bmatrix}
			\epsilon \\ 1
		\end{bmatrix}\frac{1}{1+\epsilon} = \begin{bmatrix}
			\epsilon \\ 1
		\end{bmatrix}\frac{1}{1+\epsilon} \in \DerivativeMap[C](\Velocity_\epsilon)\,,
	\end{equation}
	and arrive at $\dot K > -2\epsilon$. However as we take $\epsilon \StrongConvergence 0$, $\Velocity_\epsilon$ converges to to boundary of $\ActiveSet[C]$ and thus will only be ably to sustain a small $\dot K$ for a small amount of time before terminating the impact. It remains possible that the \textit{aggregate energy dissipation} over an interval of fixed nonzero length can be bounded away from zero. We establish a rigorous characterization of this quality by defining $\DissipationRate(s)$-dissipativity:

\begin{definition}[$\DissipationRate(s)$-dissipativity]
For a positive definite function $\DissipationRate(s):\Closure \Real^+ \rightarrow [0,1)$, the system $\dot\Velocity \in \DerivativeMap[C](\Velocity)$ is said to be $\boldmath{\DissipationRate}(s)$\textbf{-dissipative} if for all $s > 0$, for all $\Velocity \in \SolutionSet{\DerivativeMap[C]}[[0,s]]$ s.t. $\Velocity\Parentheses{[0,s]} \subseteq \ActiveSet[C]$, if $\TwoNorm{\Velocity(0)} = 1$, $\TwoNorm{v(s)} \leq 1 - \DissipationRate(s)$.
\end{definition}
Denote the collection of contact sets with this property as
\begin{equation}
	\DissipativeContactSystems = \Braces{D \in \ContactSystems : \exists \DissipationRate[D](s), D\text{ is }\DissipationRate[D](s)\text{-dissipative} }\,.
\end{equation}
Intuitively, if $K > 0$ on $\ActiveSet[C]$ and $K$ decreases at a known nonzero rate, we can show that any trajectory $\Velocity(s)$ of the multi-contact system will exit $\ActiveSet[C]$ at a time linearly bounded in $\TwoNorm{\Velocity(0)}$:
\begin{lemma}[Bounded Exit]\label{lem:exit}
	Let $\DissipationRate[C](s):\Closure \Real^+ \rightarrow [0,1)$ be positive definite and let $C \in \ContactSystems$ be $\DissipationRate[C](s)$-dissipative. Then $\forall S > 0$, $\forall \Velocity(s) \in \SolutionSet{\DerivativeMap[C]}[\Brackets{0,\TwoNorm{\Velocity(0)}\frac{S}{\DissipationRate[C](S)}}]$, $\Velocity \Parentheses{\Brackets{0,\TwoNorm{\Velocity(0)}\frac{S}{\DissipationRate[C](S)}}} \not \subseteq \ActiveSet[C]$.
\end{lemma}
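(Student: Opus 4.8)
The plan is to argue by contradiction, first normalizing the initial velocity and then iterating the dissipativity estimate over shrinking time windows whose lengths sum to exactly the claimed horizon. Write $v_0 = \TwoNorm{\Velocity(0)}$ and $L = \frac{S}{\DissipationRate[C](S)}$; since $\DissipationRate[C]$ is positive definite into $[0,1)$ and $S>0$, we have $\DissipationRate[C](S) \in (0,1)$, so $r := 1 - \DissipationRate[C](S) \in (0,1)$ and $L = \frac{S}{1-r}$ is finite. The case $v_0 = 0$ is immediate, since then $\Velocity(0) = \ZeroVector \notin \ActiveSet[C]$, so assume $v_0 > 0$. Using Lemma (Solution Homogeneity) with $k = 1/v_0$, the rescaled map $w(s) = \frac{1}{v_0}\Velocity(v_0 s)$ is a solution on $[0, L]$ with $\TwoNorm{w(0)} = 1$; because the model is positively homogeneous, $\ActiveSet[C]$ is conic, so $\Velocity([0, v_0 L]) \subseteq \ActiveSet[C]$ if and only if $w([0,L]) \subseteq \ActiveSet[C]$. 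Thus it suffices to prove the claim for $\TwoNorm{\Velocity(0)} = 1$ and horizon $L$, and I suppose for contradiction that $\Velocity([0,L]) \subseteq \ActiveSet[C]$.

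The heart of the argument is an induction that produces times $0 = t_0 < t_1 < \cdots$ with $r_n := \TwoNorm{\Velocity(t_n)} \leq r^n$ and $t_{n+1} = t_n + r_n S$. Over $[0,S]$ the solution stays in $\ActiveSet[C]$, so dissipativity gives $r_1 = \TwoNorm{\Velocity(S)} \leq 1 - \DissipationRate[C](S) = r$, establishing the base step with $t_1 = S$. For the inductive step, the shift $u(s) = \Velocity(t_n + s)$ is again a solution (the inclusion is autonomous), and rescaling by Lemma (Solution Homogeneity) with $k = 1/r_n$ yields a solution $\tilde u(s) = \frac{1}{r_n}\Velocity(t_n + r_n s)$ with $\TwoNorm{\tilde u(0)} = 1$. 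Conicity of $\ActiveSet[C]$ keeps $\tilde u$ inside $\ActiveSet[C]$ on $[0,S]$, so dissipativity applied to $\tilde u$ over $[0,S]$ gives $\TwoNorm{\tilde u(S)} \leq r$, i.e. $\TwoNorm{\Velocity(t_n + r_n S)} \leq r\, r_n \leq r^{n+1}$; setting $t_{n+1} = t_n + r_n S$ closes the induction. Note each window $[t_n, t_{n+1}]$ has length exactly $r_n S$, so the shrinking of the velocity magnitude accelerates the resolution in simulation time.

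Summing the window lengths, $t_n = \sum_{j=0}^{n-1} r_j S \leq S \sum_{j=0}^{n-1} r^j < \frac{S}{1-r} = L$, so every $t_n$ lies strictly inside $[0,L]$ (which also justifies that each successive window fits within the horizon, keeping the induction well-defined). Hence $t_n$ increases to a limit $t_\infty \leq L$, while $\TwoNorm{\Velocity(t_n)} \leq r^n \to 0$. Since solutions are absolutely continuous, hence continuous, on $[0,L]$, continuity gives $\Velocity(t_\infty) = \lim_n \Velocity(t_n) = \ZeroVector$; but $\ZeroVector \notin \ActiveSet[C]$, contradicting $\Velocity([0,L]) \subseteq \ActiveSet[C]$ with $t_\infty \in [0,L]$. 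Undoing the normalization returns the stated horizon $\TwoNorm{\Velocity(0)}\frac{S}{\DissipationRate[C](S)}$.

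I expect the main obstacle to be the simultaneous rescaling of magnitude and time in the inductive step: the payoff of dissipativity is stated only for unit-norm initial data, and the homogeneity lemma couples the normalizing factor $1/r_n$ to a contraction of the time axis, so the $n$-th window has original length $r_n S$ rather than $S$. Getting this bookkeeping right is exactly what makes the geometric series of window lengths converge to the precise bound $\frac{S}{\DissipationRate[C](S)}$ rather than merely some finite time; a secondary point requiring care is verifying that conicity of $\ActiveSet[C]$ transfers the containment hypothesis to each rescaled trajectory so that dissipativity remains applicable at every step.
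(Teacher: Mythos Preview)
Your argument is correct and mirrors the paper's own proof: normalize via the homogeneity lemma, iterate the dissipativity bound on geometrically shrinking windows so that $\TwoNorm{\Velocity(t_n)}\leq(1-\DissipationRate[C](S))^n$ with $t_n\leq S\sum_{j<n}(1-\DissipationRate[C](S))^j<\frac{S}{\DissipationRate[C](S)}$, and use continuity to force $\Velocity(t_\infty)=\ZeroVector\notin\ActiveSet[C]$. The only differences are cosmetic---you track the exact norms $r_n$ before bounding them, and you make explicit the conicity of $\ActiveSet[C]$ and the edge case $v_0=0$---so this is essentially the same proof with slightly more bookkeeping spelled out.
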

\begin{proof}
	See Appendix \ref{adx:exitproof}. 
\end{proof}
Any contact set $C$ that complies with the strong assumption of $\DissipationRate(s)$-dissipativity is an element of $\NondegenerateContactSystems$, as otherwise $\Velocity$ and $K$ could be constant (i.e. $\DissipativeContactSystems \subseteq \NondegenerateContactSystems$).
Far more useful is that we will show Theorem \ref{thm:nondegeneratedissipation} arises from the converse: that \textit{every} $C \in \NondegenerateContactSystems$ exhibits $\DissipationRate(s)$-dissipativity. This is particularly surprising for systems $C$ with $\J[C]$ not full rank, as $\Velocity(0)$ could be large, yet $\Velocity_R(s)$, the projection of $\Velocity(s)$ onto $\RangeSpace{\J[C]^T}$, could be arbitrary small, permitting small $\dot K$. We observe that the rank of $\J[C]$ does not effect whether or not $C \in \DissipativeContactSystems$, as all solutions will fall into two categories: either $\Velocity_R(s)$ is large, or the related minimal coordinate system will exit $\ActiveSet[C]$ very quickly:
\begin{theorem}\label{thm:minimalsufficiency}
	$\forall m > 0,m \geq k \geq 0$, $\NondegenerateContactSystems \cap \MinimalContactSystems{m}{k} \subseteq \DissipativeContactSystems \implies \NondegenerateContactSystems \cap \SizedContactSystems{m}{k} \subseteq \DissipativeContactSystems$
\end{theorem}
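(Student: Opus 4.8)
The plan is to reduce the general (possibly rank-deficient) system to the full-rank case that is assumed dissipative, via the equivalent minimal coordinate construction. Fix $C \in \NondegenerateContactSystems \cap \SizedContactSystems{m}{k}$ and let $\vect R$ have columns forming an orthonormal basis of $\RangeSpace{\J[C]^T}$, producing the contact set $Q$ with $\J[Q] = \J[C]\vect R$. By the equivalences of the Equivalent Minimal Coordinate Systems subsection, $Q \in \MinimalContactSystems{m}{k}$, $\Velocity \in \ActiveSet[C] \iff \vect R^T\Velocity \in \ActiveSet[Q]$, and $\Velocity(s) \in \SolutionSet{\DerivativeMap[C]}[I] \iff w(s) := \vect R^T\Velocity(s) \in \SolutionSet{\DerivativeMap[Q]}[I]$. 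I would first transfer nondegeneracy to $Q$: since the admissible forces are shared and $\J[Q]^T\Force[C] = \vect R^T\J[C]^T\Force[C]$ with $\vect R^T$ injective on $\RangeSpace{\J[C]^T}$, one has $\ZeroVector \in \NetForce[C](\Velocity) \iff \ZeroVector \in \NetForce[Q](\vect R^T\Velocity)$, so $Q \in \NondegenerateContactSystems$. Hence $Q \in \NondegenerateContactSystems \cap \MinimalContactSystems{m}{k} \subseteq \DissipativeContactSystems$ by hypothesis, with some witness $\DissipationRate[Q]$; using Lemma \ref{lem:dissipate} (monotone decrease of $\TwoNorm{w}$) I would replace $\DissipationRate[Q]$ by a nondecreasing witness valued in $[0,1)$ (its running supremum, capped below $1$), which remains valid.

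The key geometric observation is that the component of $\Velocity$ orthogonal to $\RangeSpace{\J[C]^T}$ is frozen. Writing $\Velocity = \Velocity_R + \Velocity_\perp$ for this orthogonal split, $\dot\Velocity \in \RangeSpace{\J[C]^T}$ (from (\ref{eq:ForceDerivative})) forces $\Velocity_\perp$ constant, so with $\rho := \TwoNorm{\Velocity_R(0)} = \TwoNorm{w(0)} \leq \TwoNorm{\Velocity(0)} = 1$ we obtain $\TwoNorm{\Velocity(s)}^2 = \TwoNorm{w(s)}^2 + (1-\rho^2)$ for every $s$; all dissipation must flow through $w$. Rescaling $w$ to unit initial norm via Lemma \ref{lem:homogeneity} and invoking $\DissipationRate[Q]$-dissipativity of $Q$ gives $\TwoNorm{w(s)} \leq \rho\,(1 - \DissipationRate[Q](s/\rho))$ whenever $w([0,s]) \subseteq \ActiveSet[Q]$. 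Combining with the frozen-nullspace identity and $1-(1-x)^2 \geq x$ yields $\TwoNorm{\Velocity(s)}^2 \leq 1 - \rho^2\,\DissipationRate[Q](s/\rho) \leq 1 - \rho^2\,\DissipationRate[Q](s)$, the last step using monotonicity of $\DissipationRate[Q]$ together with $s/\rho \geq s$.

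The main obstacle is that this estimate is vacuous as $\rho \to 0$: a unit pre-impact velocity may lie almost entirely in the nullspace of $\J[C]$, leaving $\Velocity_R$, and hence the available dissipation, arbitrarily small. The resolution is to show that remaining inside $\ActiveSet[C]$ for a fixed duration itself forces $\rho$ to be bounded below. Since $w$ inherits $w([0,s]) \subseteq \ActiveSet[Q]$ from $\Velocity([0,s]) \subseteq \ActiveSet[C]$, I would apply the Bounded Exit Lemma \ref{lem:exit} to $Q$ at a fixed parameter, say $S_0 = 1$ with $c_0 := \DissipationRate[Q](1) > 0$: if $\rho/c_0 \leq s$, then $w$ restricted to $[0,\rho/c_0] \subseteq [0,s]$ would be a solution confined to $\ActiveSet[Q]$ on an interval of length $\TwoNorm{w(0)}\,S_0/\DissipationRate[Q](S_0)$, contradicting the lemma. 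Hence $\rho > c_0 s$. In particular, for $s \geq 1/c_0$ no unit-norm trajectory can remain in $\ActiveSet[C]$ on $[0,s]$, so the dissipativity requirement is vacuous there.

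Finally I would assemble the witness for $C$. On the nonvacuous range $s \in (0,1/c_0)$, substituting $\rho > c_0 s$ into the dissipation estimate gives $\TwoNorm{\Velocity(s)}^2 \leq 1 - c_0^2 s^2\,\DissipationRate[Q](s)$, where $c_0^2 s^2\,\DissipationRate[Q](s) \in (0,1)$ because $c_0 s < 1$ and $\DissipationRate[Q](s) < 1$. Setting $\DissipationRate[C](s) = 1 - \sqrt{1 - c_0^2 s^2\,\DissipationRate[Q](s)}$ on this range and extending it by a positive constant in $[0,1)$ for $s \geq 1/c_0$ produces a positive definite map into $[0,1)$ satisfying $\TwoNorm{\Velocity(s)} \leq 1 - \DissipationRate[C](s)$; thus $C$ is $\DissipationRate[C](s)$-dissipative and $C \in \DissipativeContactSystems$. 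The delicate points to verify carefully are the nondegeneracy transfer to $Q$, the legitimacy of the monotone replacement of $\DissipationRate[Q]$, and that the assembled $\DissipationRate[C]$ is genuinely positive definite rather than merely nonnegative.
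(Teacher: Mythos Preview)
Your proposal is correct and follows essentially the same route as the paper: pass to the minimal-coordinate system $Q$, freeze the nullspace component, use $\DissipationRate[Q]$-dissipativity (via homogeneity) to bound $\TwoNorm{\Velocity_R(s)}$, and invoke Lemma~\ref{lem:exit} to lower-bound $\rho=\TwoNorm{\Velocity_R(0)}$, then assemble a witness $\DissipationRate[C]$. The only cosmetic differences are that you apply Lemma~\ref{lem:exit} at a fixed parameter $S_0=1$ (yielding $\rho>c_0 s$ and a case split at $s=1/c_0$) whereas the paper applies it at $S=s$ (yielding $\rho>\DissipationRate[Q](s)$ directly), and that you explicitly monotonize $\DissipationRate[Q]$ while the paper instead implicitly uses the nonincrease of $\TwoNorm{w}$ from Lemma~\ref{lem:dissipate} to pass from $\DissipationRate[Q](s/\rho)$ to $\DissipationRate[Q](s)$; both choices lead to valid (though differently-shaped) witnesses.
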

\begin{proof}
	See Appendix \ref{adx:minimalsufficiency}.
\end{proof}
Finally, we prove the primary claim of this work. Intuitively, if there exists $C \in \NondegenerateContactSystems$ that is not $\DissipationRate(s)$-dissipative, then one could construct a sequence of convergent solutions to $\dot \Velocity \in \DerivativeMap[C]$ that dissipate arbitrarily small amounts of energy. Therefore their limit, also a solution to $\dot \Velocity \in \DerivativeMap[C]$ as the solution set is closed, dissipates no energy---leading to a contradiction with Theorem \ref{thm:strictdissipation}. This argument will be used in an inductive manner, incrementing the size of the contact sets:
\begin{theorem}[Dissipation Inductive Step]\label{thm:incrementaldissipation}
	Assume $\NondegenerateContactSystems \cap \SizedContactSystems{m'}{k'} \subseteq \DissipativeContactSystems$ for all $m' \geq k' \geq 0$ with $k' < k$ or $k'=k$ and $m' < m$. Then $\NondegenerateContactSystems \cap \SizedContactSystems{m}{k} \subseteq \DissipativeContactSystems$.
\end{theorem}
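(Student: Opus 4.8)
The plan is to argue by contradiction after reducing to full-rank systems. By Theorem~\ref{thm:minimalsufficiency} it suffices to prove $\NondegenerateContactSystems \cap \MinimalContactSystems{m}{k} \subseteq \DissipativeContactSystems$, so fix $N \in \NondegenerateContactSystems \cap \MinimalContactSystems{m}{k}$ and suppose $N$ is not $\DissipationRate(s)$-dissipative. The first step is to convert non-dissipativity into a concrete limiting object. Define $g(s)$ to be the supremum of $\TwoNorm{\Velocity(s)}$ over all $\Velocity \in \SolutionSet{\DerivativeMap[N]}[[0,s]]$ with $\Velocity([0,s]) \subseteq \ActiveSet[N]$ and $\TwoNorm{\Velocity(0)} = 1$. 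By Lemma~\ref{lem:dissipate} the norm is non-increasing, so $g(s) \in [0,1]$ and $g$ is non-increasing; $N$ then fails to be dissipative exactly when $g(s_0) = 1$ for some $s_0 > 0$ (otherwise $1-g$ can be minorized by a positive definite rate valued in $[0,1)$). Fix such an $s_0$ and a maximizing sequence $\Velocity_n$. Since $\DerivativeMap[N]$ is uniformly bounded, the $\Velocity_n$ are uniformly Lipschitz and uniformly bounded, so Theorem~\ref{thm:Rellich} and Lemma~\ref{lem:NonEmptyClosure} furnish a subsequential uniform limit $\Velocity_\infty \in \SolutionSet{\DerivativeMap[N]}[[0,s_0]]$. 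Continuity of the norm and monotonicity then force $\TwoNorm{\Velocity_\infty(s)} \equiv 1$ on $[0,s_0]$: the limit is a nontrivial solution that dissipates no energy.

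Next I would locate $\Velocity_\infty$ on the boundary. Uniform convergence gives $\Velocity_\infty([0,s_0]) \subseteq \Closure \ActiveSet[N]$. If $\Velocity_\infty$ entered the open set $\ActiveSet[N]$ on any subinterval, Theorem~\ref{thm:strictdissipation} (valid since $N \in \NondegenerateContactSystems$) would make $\TwoNorm{\Velocity_\infty}$ strictly decreasing there, contradicting constancy; hence $\Velocity_\infty(s) \in \Closure\ActiveSet[N] \setminus \ActiveSet[N]$ for every $s$, i.e. at each time every contact is separating or grazing, with at least one grazing.

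The heart of the argument is an inductive reduction of the contact set. At any time $s^*$ at which some contact strictly separates, the grazing/penetrating contacts $N_0 = \{ c \in N : \Jn[c]\Velocity_\infty(s^*) = 0 \}$ satisfy $N_0 \subsetneq N$, and by continuity the separating contacts stay separating nearby, so on a small interval $\Velocity_\infty$ solves the strictly smaller inclusion $\DerivativeMap[N_0]$. Nondegeneracy transfers downward: since $\ActiveSet[N_0] \subseteq \ActiveSet[N]$ and $\NetForce[N_0] \subseteq \NetForce[N]$, a zero-force equilibrium of $N_0$ in $\ActiveSet[N_0]$ would be one of $N$ in $\ActiveSet[N]$, so $N_0 \in \NondegenerateContactSystems$. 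As $N_0$ is lexicographically smaller in $(k,m)$, the induction hypothesis gives $N_0 \in \DissipativeContactSystems$, whence (again via Theorem~\ref{thm:strictdissipation}) $\Velocity_\infty$ must also hug the boundary of $\ActiveSet[N_0]$; iterating strictly shrinks the contact set until no contact ever separates on an interval.

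This residual ``all contacts persistently grazing'' case is where I expect the main difficulty. Here $\Jn[N']\Velocity_\infty \equiv \ZeroVector$ on an interval for the current set $N'$. In the frictionless instance this is immediate: full rank of $\J[N'] = \Jn[N']$ forces $\Velocity_\infty = \ZeroVector$, contradicting unit norm. In the frictional instance, zero dissipation forces $\sum_c \FrictionCoeff[c]\NormalForce[c]\TwoNorm{\Jt[c]\Velocity_\infty} = 0$ a.e., so every load-bearing frictional contact sticks ($\Jt[c]\Velocity_\infty = \ZeroVector$); combining the sticking and grazing conditions with the full-rank condition \eqref{eq:Jfullrank} should again pin $\Velocity_\infty = \ZeroVector$, once non-load-bearing contacts are dropped to a genuine nondegenerate subsystem and the induction on the number of frictional contacts $k$ is invoked. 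Making this final pinning rigorous---correctly handling the time-varying force support and verifying that the reduced system still satisfies the full-rank and nondegeneracy hypotheses---is the delicate step; everything else is routine compactness and bookkeeping.
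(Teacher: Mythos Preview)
Your opening setup---reducing to $\MinimalContactSystems{m}{k}$ via Theorem~\ref{thm:minimalsufficiency}, taking a maximizing sequence $\Velocity_n$, and extracting a uniform limit $\Velocity_\infty$ of constant unit norm---matches the paper exactly. The divergence, and the gap, begins once you have $\Velocity_\infty$.

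You try to extract the contradiction from $\Velocity_\infty$ alone, but $\Velocity_\infty$ lives on $\Closure\ActiveSet[N]\setminus\ActiveSet[N]$, where $\DerivativeMap[N]$ contains $\ZeroVector$. So $\Velocity_\infty$ being a constant-norm (indeed constant, by Theorem~\ref{thm:stop}, which you do not invoke) solution of any subsystem $\DerivativeMap[N_0]$ is no contradiction at all: on the boundary the inclusion admits $\dot\Velocity=\ZeroVector$ with all normal forces zero, so neither the dissipativity of $N_0$ nor your identity $\sum_c \FrictionCoeff[c]\NormalForce[c]\TwoNorm{\Jt[c]\Velocity_\infty}=0$ carries information. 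Your ``iteration'' just re-derives that $\Velocity_\infty$ sits on successively smaller boundaries, which was already known from $\Jn[N]\Velocity_\infty\ge 0$; it never closes.

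The paper's move is to go \emph{back to the approximants} $\Velocity_n$, which by construction stay in $\ActiveSet[N]$. In Case~1 ($\Jn[c]\Velocity_\infty>0$ for some $c$), uniform convergence makes $c$ inactive along $\Velocity_n$ for large $n$, so $\Velocity_n\in\SolutionSet{\DerivativeMap[N\setminus\{c\}]}$ with $\Velocity_n([0,s_0])\subseteq\ActiveSet[N\setminus\{c\}]$; the induction hypothesis on $m-1$ contacts then bounds $\TwoNorm{\Velocity_n(s_0)}$ away from $1$. In Case~2 ($\Jn[N]\Velocity_\infty=\ZeroVector$, $\Jt[w]\Velocity_\infty\neq\ZeroVector$ for some frictional $w$), the paper uses a construction you are missing entirely: each sliding frictional contact $w$ is replaced by three \emph{frictionless} contacts with normals $\Jn[w]+\vect d_{i,w}^T\Jt[w]$ clustered around $-\FrictionCoeff[w]\Direction{\Jt[w]\Velocity_\infty}$, so that the frictional impulse of $\Velocity_n$ near $\Velocity_\infty$ is reproduced by these tilted normals while remaining active. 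This manufactures a system $\bar C$ with strictly fewer frictional contacts, invoking precisely the $k'<k$ branch of the hypothesis; your reduction only ever drops contacts and thus never uses that branch, which is why your frictional ``residual'' case stalls.
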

\begin{proof}
	Suppose not. Then by Theorem \ref{thm:minimalsufficiency}, there is a set of contacts $C \in \NondegenerateContactSystems \cap \MinimalContactSystems{m}{k}$, $S > 0$, and a corresponding sequence of solutions $\Parentheses{\Velocity^j(s)}_{j \in \Natural}$, $\Velocity^j(s) \in \SolutionSet{\DerivativeMap[C]}[[0,S]]$, all starting with velocity magnitude $1$ ($\TwoNorm{\Velocity^j(0)} = 1$) and never exiting $\ActiveSet[C]$.
	We must also have that each dissipates less energy than the last: $\TwoNorm{\Velocity^j(s)} > 1 - \frac{1}{j}$. As $\DerivativeMap[C]$ is uniformly bounded, $\Velocity^j$ are unformly bounded and equicontinuous.
	By Theorem \ref{thm:Rellich} and Lemma \ref{lem:NonEmptyClosure}, we may assume that $\exists \Velocity^\infty(s) \in \SolutionSet{\DerivativeMap[C]}[[0,S]]$ such that $\Velocity^j \UniformConvergence \Velocity^\infty$. Therefore $\TwoNorm{\Velocity^\infty(s)} = 1$ for all $s$ and by Theorem \ref{thm:stop} $\Velocity^\infty$ is constant. As $C \in \NondegenerateContactSystems$, by Theorem \ref{thm:strictdissipation}, $\Velocity^\infty$ is not an element of $\ActiveSet[C]$ (i.e., $\Jn[C] \Velocity^\infty \geq \ZeroVector$). As $\J[C]$ is full rank, $\J[C] \Velocity^\infty \neq \ZeroVector$. Let $\Force[C]^j(s)$ be the corresponding force vector for each $\Velocity^j(s)$.
	
		Case 1: One contact has strictly deactivated ($\exists c \in C$, $\Jn[c] \Velocity^\infty > \ZeroVector$). But then as $\Velocity^j \UniformConvergence \Velocity^\infty$, by taking a subsequence starting from sufficiently high $j$ we may assume that $c$ \textit{never} activates ($\forall j,t,\Jn[c] \Velocity^j(s) > \ZeroVector$), and therefore at least one of the other contacts is always active ($\Velocity^j([0,S]) \subseteq \ActiveSet[C \setminus \Braces{c}]$). But then only the forces from $C \setminus \Braces{c}$ determine $\dot \Velocity^j$, and thus $\Velocity^j \in \SolutionSet{\DerivativeMap[C \setminus \Braces{c}]}[[0,S]]$. As removing a contact shrinks the set of possible forces to apply ($\NetForce[C \setminus \{c\}] \subseteq \NetForce[C]$), $C \setminus \Braces{c} \in \NondegenerateContactSystems$ and contains $m-1$ contacts. Then by assumption, for some $\DissipationRate(s)$, $C \setminus \Braces{c}$ is $\DissipationRate(s)$-dissipative. But $\TwoNorm{\Velocity^j(s)} \StrongConvergence 1$. Contradiction!
		\begin{figure}[h]
		\vspace{0.5mm}
		\centering
			\includegraphics[width=.50\hsize]{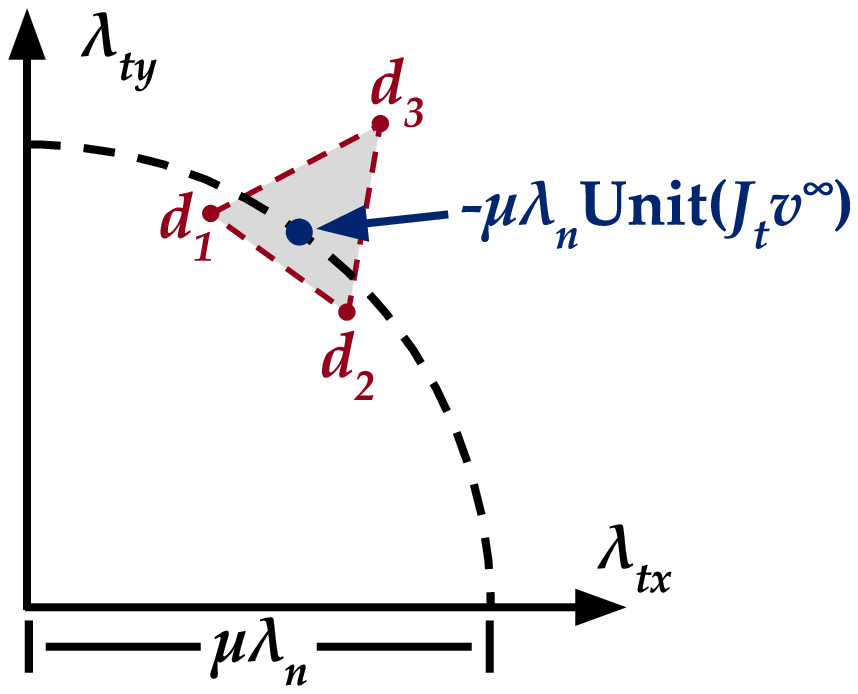}
			\caption{Conversion of a frictional contact into three frictionless contacts. As $j \StrongConvergence \infty$, we can contain $\FrictionForce[w]^j$ in an arbitrarily small neighborhood around $-\FrictionCoeff[w]\NormalForce[w]^j\Direction{\Jt[w]\Velocity^\infty}$. We pick the neighborhood to be a small triangle with vertices $\vect d_{i,w}$, such that all $\FrictionForce[w]^j$ lie in $\NormalForce[w]\Hull\Parentheses{\Braces{\vect d_{1,w},\vect d_{2,w},\vect d_{3,w}}}$, thus (\ref{eq:contactcomposition}). If the triangle is small, each $\vect d_i$ will be nearly anti-parallel to $\Jt[w]\Velocity^\infty$, implying (\ref{eq:persistence}).}
			\label{fig:contactconversion}
			
		\end{figure}
		
		Case 2: At least one contact always slides $\Parentheses{\Jt[C]\Velocity^\infty \neq \ZeroVector,\,\Jn[C]\Velocity^\infty = \ZeroVector}$.
		Let $W = \Braces{ w \in C :\TwoNorm{\Jt[w] \Velocity^\infty} > \ZeroVector} \neq \emptyset$ be the set of contacts that slide for velocity $\Velocity^\infty$.
		Then as $\Unit$ is u.s.c., $\forall w \in W,\, \Unit\Parentheses{\Jt[w] \Velocity^j} \UniformConvergence \Unit\Parentheses{\Jt[w] \Velocity^\infty}$ (i.e. convergence of the velocity to $\Velocity^\infty$ implies convergence of the direction of sliding on each contact in $W$).
		Therefore WLOG by taking a subsequence starting from sufficiently high $j$ we may assume $\forall w \in W,\, \exists \vect d_{1,w},\vect d_{2,w},\vect d_{3,w}$ sufficiently close to $-\FrictionCoeff[w]\Direction{\Jt[w] \Velocity^\infty}$ and associated new contacts $\bar w_1, \bar w_2, \bar w_3$ such that
		\begin{align}
			\Jn[\bar w_i] &= \Jn[w] + \vect d_{i,w}^T \Jt[w],\\
			\Jn[\bar w_i] \Velocity^j (s) &< 0,\label{eq:persistence} \\
			\J[w]^T\Force[w]^j (s) &\in \NormalForce[w]^j(s)\Hull\Parentheses{\bigcup_i\Braces{\Jn[\bar w_i]}} a.e.\,,\label{eq:contactcomposition}
		\end{align}
		for $i \in \Braces{1,2,3}$.
		Denote $\bar W = \bigcup_{i,w} \bar w_i$ and $\bar C = (C \cup \bar W)\setminus W$.
		(\ref{eq:persistence}) and (\ref{eq:contactcomposition}) in conjunction imply that, for velocities $\Velocity^j (s) \approx \Velocity^\infty$, each sliding frictional contact pushes mostly in one direction.
		Furthermore, the associated frictional force can be generated by three frictionless contacts tilted away from the sliding direction ($\Velocity^j \in \SolutionSet{\DerivativeMap[\bar C]}[[0,S]]$) which never deactivate ($\Velocity^j([0,S]) \subseteq \ActiveSet[\bar C]$). Figure \ref{fig:contactconversion} illustrates this construction.
		As $\bar C$ has strictly fewer frictional contacts than $C$ and is not $\DissipationRate(s)$-dissipative ($\TwoNorm{\Velocity^j(s)} \StrongConvergence 1$), by assumption we must have that $\bar C \not \in \NondegenerateContactSystems$.
		By definition of $\NondegenerateContactSystems$ there must exist some penetrating velocity $\Velocity \in \ActiveSet[\bar C]$ such that $\ZeroVector \in \NetForce[\bar C](\Velocity)$ is a permissible net force.
		We therefore must be able to find individual contact forces $\NormalForce[\bar c]f_{\bar c}$ with $\NormalForce[\bar c] \geq 0$ and $f_{\bar c} \in \NetForce[\bar c]\Parentheses{\Velocity}$ for each contact $\bar c \in \bar C$ such that $\sum_{\bar c \in \bar C} \NormalForce[\bar C] = 1$ and $\sum_{\bar c \in \bar C} \NormalForce[\bar c]f_{\bar c} = \ZeroVector$.
		As no combination of the original contacts $C$ can create zero net force alone, one of the $\bar w \in \bar W$ must strictly activate ($\NormalForce[\bar w]f_{\bar w} \neq 0$). By construction of $W$ and $\bar W$ and the assumption of Case 2, we have $\J[ C \setminus W]\Velocity^\infty = \ZeroVector$, and thus $f_{\bar c}^T\Velocity^\infty = 0$ for each $\bar c \in C \setminus W$ and $f_{\bar w}^T\Velocity^\infty < 0$ for each $\bar w \in \bar W$. Thus $\sum_{\bar c \in \bar C} \NormalForce[\bar c]f_{\bar c}^T \Velocity^\infty < 0$. But then $\sum_{\bar c \in \bar C} \NormalForce[\bar c]f_{\bar c} \neq \ZeroVector$. Contradiction!
		\end{proof}
        We are now ready to prove the main result of this section.
		\begin{proof}[Proof of Theorem \ref{thm:nondegeneratedissipation}] We will reach the claim by showing $\NondegenerateContactSystems = \DissipativeContactSystems$.
		$\NondegenerateContactSystems \supseteq \DissipativeContactSystems$ trivially. Any $C \in \MinimalContactSystems{1}{0}$ is of the form
			\begin{align}
				\DerivativeMap[C](\Velocity) = \begin{cases}
					\Braces\ZeroVector  & \InactiveSet[C] = \{j\Velocity > 0\}\\
					\Braces j & \ActiveSet[C] = \{j\Velocity < 0\} \\
					\Hull \Parentheses{\Braces{0,j}} & j\Velocity = 0
				\end{cases}
			\end{align}
			with $\Velocity, j \in \Real$, $j \neq 0$. Such a system is $\DissipationRate[C]$-dissipative with
			\begin{equation}
				\DissipationRate[C](s) = \min \Braces{\Norm{j}t,\frac{1}{2}}.
			\end{equation}
			$\SizedContactSystems{1}{0} \subseteq \DissipativeContactSystems$ by Theorem \ref{thm:minimalsufficiency}. $\NondegenerateContactSystems \subseteq \DissipativeContactSystems $ follows from nested induction on $(m,k)$ via Theorem \ref{thm:incrementaldissipation}. Therefore, $\NondegenerateContactSystems = \DissipativeContactSystems $.
		\end{proof}
\section{Conclusion}
Non-unique behavior is a pervasive complexity that is present in both real-world robotic systems and common models capturing frictional impacts between rigid bodies---and thus accurate incorporation of such phenomena is an essential component of robust planning, control, and estimation algorithms. Our model presents a state-of-the-art theoretical foundation for the capture of this behavior, because despite the high versatility of allowing impacts to resolve at arbitrary relative rates, it is guaranteed to terminate in finite time under far more modest conditions than shown for previous models.

The logical progression from these theoretical results is to develop a numerical scheme to generate the post-impact velocity set. Constructing approximate solutions to the differential inclusion poses significant challenges associated with discontinuities in $\dot \Velocity$. While simple Euler schemes will converge to the true solution set \cite{Aubin1984}, the convergence rate is unknown, and simulation time and therefore computational complexity would scale linearly with the scale of $\TwoNorm{\Velocity_-}$ given Theorem \ref{thm:nondegeneratedissipation}. Tools from time-stepping schemes (e.g. \cite{Anitescu97,Stewart1996a}) may circumvent these issues. Another strategy is to precompute a formula for the entire post-impact set as a function of $\Velocity_-$. Sums-of-squares programming presents potential for construction of an outer approximation.

Future generalizations of the model include elastic impacts using Poisson restitution; resolution of Painlev\'e's Paradox; and a full rigid body dynamics model that has continuous solutions through impact.
\label{sec:conclusion}
\appendix
\subsection{Proof of Lemma \ref{lem:singlefrictional}}\label{adx:singlestopproof}
	Let $\vect R$ be a matrix with columns that constitute an orthogonal basis of $\RangeSpace{\J[c]^T}$. By equivalence of norms there exists $\epsilon > 0$ such that
	\begin{equation}
		\Norm{\Jn[c]\Velocity}_1 + \TwoNorm{\Jt[c]\Velocity} \geq \epsilon\TwoNorm{\vect R^T\Velocity}.
	\end{equation}
	Pick $S = \Parentheses{\epsilon\min\Parentheses{\FrictionCoeff[c],1}}^{-1}$.
	Let $V(s) = \TwoNorm{\vect R^T\Velocity(s)}^2$. Assume $\Velocity(s) \in \ActiveSet[c]$ for $s < s^* = \TwoNorm{\vect R^T\Velocity(0)}S \leq \TwoNorm{\Velocity(0)}S$. 
	\begin{align}
		\dot V &= 2 \dot \Velocity^T \vect R \vect R^T \Velocity\,,\\
		&\in  2\Parentheses{\Jn[c] - \FrictionCoeff[c] \Unit\Parentheses{\Jt[c]\Velocity}^T\Jt[c]\ } \vect R \vect R^T \Velocity\,,\\
		&= - 2\Norm{\Jn[c]\Velocity}_1 - 2\FrictionCoeff[c] \TwoNorm{\Jt[c]\Velocity}\,,\\
		&\leq -2\epsilon\min\Parentheses{\FrictionCoeff[c],1}\sqrt{V}\,,
	\end{align}
	on $[0,s^*]$ and thus $V\Parentheses{s^*} \leq \Parentheses{\sqrt{V(0)} - \epsilon\min\Parentheses{\FrictionCoeff[c],1}s^*}^2 = 0$. Therefore $\Jn[c]\Velocity\Parentheses{s^*} = 0$.

\subsection{Proof of Theorem \ref{thm:stop}}\label{adx:stopproof}
Let $\Velocity(s) \in \SolutionSet{\DerivativeMap[C]}[I]$ with $\Velocity(s)$ non-constant. Let $\Force[C](s)$ be the associated vector of force variables. As $\Velocity(s)$ is continuous, we may select $s^* \in \Interior I$ such that $\forall \delta > 0$, $\Velocity(s)$ is non-constant on $[s^*, s^* + \delta]$. Let $A = \Braces{a \in S : \Jn[a]\Velocity(s^*) \leq 0}$ be the set of active contacts at $s = s^*$. Let $B$ the the largest subset of $A$ such that $\Jn[B]\Velocity = \ZeroVector$ and $\Jt[B]\Velocity = \ZeroVector$. As $\Velocity$ is continuous, $\exists \delta_\epsilon > 0$ and $\epsilon > 0$ such that $\forall s \in [s^*,s^* + \delta_\epsilon] \subseteq I$,
	\begin{itemize}
		\item $\Jn[C \setminus A]\Velocity(s) > \epsilon$
		\item $\Jn[c]\Velocity(s) < -\epsilon$ for $c \in A \setminus B$ frictionless
		\item $\Jn[c]\Velocity(s) < -\epsilon$  or $\TwoNorm{\Jt[c]\Velocity(s)} > \frac{1}{\FrictionCoeff[i] }\epsilon$ for $c \in A \setminus B$ frictional.
	\end{itemize}
	 Therefore no new contacts activate before $s^* + \delta_\epsilon$, and
	\begin{equation}
		\Velocity(s) = \Velocity(s^*) + \J[C]^T \Impulse[C](s^*,s) = \Velocity(s^*) + \J[A]^T \Impulse[A](s^*,s)\,,
	\end{equation}
	on $[s^*,s^*+\delta_\epsilon]$. Select one such $s$ with $\Velocity(s) \neq \Velocity(s^*)$. By Lemma \ref{lem:dissipate},
	\begin{align}
		0 &\geq \frac{1}{2}\TwoNorm{\Velocity(s)}^2 - \frac{1}{2}\TwoNorm{\Velocity(s^*)}^2\,,\\
						&= \Velocity(s^*)^T\Parentheses{\Velocity(s) - \Velocity(s^*)} + \frac{1}{2}\TwoNorm{\Velocity(s) - \Velocity(s^*)}^2\,,\\
						&= \Parentheses{\J[A \setminus B]v(s^*)}^T \Impulse[A \setminus B](s^*,t) + \frac{1}{2}\TwoNorm{\Velocity(s) - \Velocity(s^*)}^2\,.
	\end{align}
	Therefore, we must have $\Norm{\Impulse[A \setminus B](s^*,t)}_1> 0$. Finally,
	\begin{align}
		K(\Velocity(s)) &= K(\Velocity(s^*)) + \int_{s^*}^{s} (\J[c]\Velocity(\tau))^T\Force[c](\tau) \Differential \tau\,, \\
			   &\leq K(\Velocity(s^*)) - \epsilon ||\Lambda_{A \setminus S}(s^*,s)||_1\,, \\  &< K(\Velocity(s^*))\,.
	\end{align}
	Therefore $\TwoNorm{\Velocity}$ is non-constant.
\subsection{Proof of Lemma \ref{lem:exit}}\label{adx:exitproof}
Assume WLOG by Lemma \ref{lem:homogeneity} that $\TwoNorm{\Velocity(0)} = 1$ and that $\Velocity(s) \in \ActiveSet[C]$ on $0 \leq s < \frac{S}{\DissipationRate[C](S)}$. As $C$ is $\DissipationRate[C](s)$-dissipative, $\exists s_1 \in \Brackets{0,S}$ such that $\TwoNorm{\Velocity\Parentheses{s_1}} \leq 1 -\DissipationRate[C](S)$. A sequence $\Sequence{s}{k}$ can be iteratively constructed by Lemma \ref{lem:homogeneity} such that
		\begin{itemize}
			\item $s_k \in \Brackets{s_{k-1}, s_{k-1} + S\Parentheses{1-\DissipationRate[C](S)}^{k-1}} \subseteq \Brackets{0,\frac{S}{\DissipationRate[C](S)}}$
			\item $\TwoNorm{\Velocity\Parentheses{s_k}} \leq \Parentheses{1-\DissipationRate[C](S)}\TwoNorm{\Velocity\Parentheses{s_{k-1}}} \leq \Parentheses{1-\DissipationRate[C](S)}^k$
		\end{itemize}
		Therefore $\exists s_\infty \in \Brackets{0,\frac{S}{\DissipationRate[C](S)}}$ with $s_n \StrongConvergence s_\infty$ and by continuity of $\Velocity$, $\Velocity\Parentheses{s_\infty} = \ZeroVector \not \in \ActiveSet[C]$.
		
\subsection{Proof of Theorem \ref{thm:minimalsufficiency}}\label{adx:minimalsufficiency}
Let $C \in \NondegenerateContactSystems \cap \SizedContactSystems{m}{k}$. Let $\vect R$ and $\vect N$ be matrices with columns that constitute orthogonal bases of $\RangeSpace{\J[C]^T}$ and $\NullSpace{\J[C]}$, respectively.
	Therefore there exists contact set $Q$ of size $(m,k)$ and a positive definite function $\DissipationRate[Q](s)$ such that $\J[Q] = \J[C]\vect R$ is full column rank, $\ActiveSet[Q] = \vect R^T\ActiveSet[C]$, and $Q$ is $\DissipationRate[Q](s)$-dissipative.
	Let $s>0$, $\Velocity \in \SolutionSet{\DerivativeMap[C]}[[0,s]]$, $\TwoNorm{\Velocity(0)} = 1$, and $\Velocity\Parentheses{\Brackets{0,s}} \subseteq \ActiveSet[C]$. Decompose $\Velocity(s) = \Velocity_R(s) + \Velocity_N(s) = \vect{RR}^T \Velocity(s) + \vect{NN}^T \Velocity(0)$. We must have $\vect R^T\Velocity \in \SolutionSet{\DerivativeMap[Q]}[[0,s]]$. Therefore as $\vect R^T\Velocity([0,s]) \subseteq \vect R^T \ActiveSet[C] = \ActiveSet[Q]$, by Lemma \ref{lem:exit}, $s < \TwoNorm{\vect R^T \Velocity(0)}\frac{s}{\DissipationRate[Q](s)}$. Thus $\TwoNorm{\Velocity_R(0)} > \DissipationRate[Q](s)\TwoNorm{\Velocity(0)}$ and
	\begin{align}
		\TwoNorm{\Velocity(s)}^2 &= \TwoNorm{\Velocity_N(s)}^2 + \TwoNorm{\Velocity_R(s)}^2,\\
		&\leq \TwoNorm{\Velocity_N(0)}^2 + \Parentheses{1 - \DissipationRate[Q](s)}^2\TwoNorm{\Velocity_R(0)}^2,\\
		&\leq \TwoNorm{\Velocity(0)}^2 - \DissipationRate[Q](s)\Parentheses{1 - \DissipationRate[Q](s)}\TwoNorm{\Velocity_R(0)}^2,\\
		&\leq 1 - \DissipationRate[Q]^3(s)\Parentheses{1 - \DissipationRate[Q](s)}.
	\end{align}
	Therefore $C$ is $\Parentheses{1 - \sqrt{1 - \DissipationRate[Q]^3\Parentheses{1 - \DissipationRate[Q]}}}$-dissipative.

\Urlmuskip=0mu plus 1mu\relax
\bibliographystyle{plainnat}
\raggedright
\bibliography{library,matt}

\end{document}